%% file: main.tex
\documentclass[11pt, a4paper]{article}

\usepackage{amsmath}            
\usepackage{mathtools}          
\usepackage{amsthm}             
\usepackage{amsfonts}           
\usepackage{amssymb}            

\newtheorem{theorem}{Theorem}[section]
\newtheorem{lemma}{Lemma}[section]

\usepackage[utf8]{inputenc}     
\usepackage[T1]{fontenc}        
\usepackage{newtxtext}          
\usepackage{newtxmath}          
\usepackage{microtype}          

\usepackage[margin=1in]{geometry}

\usepackage[english]{babel}
\usepackage{graphicx}
\usepackage{subcaption}
\usepackage[colorlinks=true, allcolors=black]{hyperref}
\usepackage{bm}
\usepackage{placeins}
\usepackage{tcolorbox}

\usepackage{tikz}
\usetikzlibrary{shapes, arrows.meta, positioning, calc}

\title{Spectral Higher-Order Neural Networks}
\author{
    Gianluca Peri\textsuperscript{1}, 
    Timoteo Carletti\textsuperscript{2}, 
    Duccio Fanelli\textsuperscript{1}, 
    Diego Febbe\textsuperscript{1} \\[2ex]
    \small \textsuperscript{1}University of Florence, Italy \\
    \small \textsuperscript{2}University of Namur, Belgium \\[1ex]
    \small \texttt{\{gianluca.peri, duccio.fanelli, diego.febbe\}@unifi.it} \\
    \small \texttt{timoteo.carletti@unamur.be}
}

\date{}

\begin{document}
\maketitle

\begin{abstract}
\noindent
Neural networks are fundamental tools of modern machine learning. The standard paradigm assumes binary interactions (across feedforward linear passes) between inter-tangled units, organized in sequential layers. Generalized architectures have been also designed that move beyond pairwise interactions, so as to account for higher-order couplings among computing neurons. Higher-order networks are however usually deployed as augmented graph neural networks (\textsc{gnn}s), and, as such, prove solely advantageous in contexts where the input exhibits an explicit hypergraph structure. Here, we present \textit{Spectral Higher-Order Neural Networks} (\textsc{shonn}s), a new algorithmic strategy to incorporate higher-order interactions in general-purpose, feedforward, network structures. \textsc{shonn}s leverages a reformulation of the model in terms of spectral attributes. This allows to mitigate the common stability and parameter scaling problems that come along weighted, higher-order, forward propagations.
\end{abstract}

\section{Introduction}
 Higher-Order Neural Networks (\textsc{honn}s) can claim a long history in the field of machine learning, despite having being overlooked for many years. To the best of our knowledge, the first conceptual attempt to account for higher order schemes dates back to 1986, with the introduction of the so called \textit{sigma-pi} units~\cite{Shin1991ThePN}. These are structures that first group the inputs into clusters, then compute the product within each cluster, and finally sum the obtained products. \textit{Sigma-pi} units have been conceived, from the very beginning, as neural networks. The acronym \textsc{honn} became popular only one year after their actual formalization~\cite{Rumelhart1986ParallelDP, Giles1987LearningIA}. The field of modern deep learning has however steered to progressively favor dense neural network models: pairwise interactions among neurons are combined into weighted sums before being fed, as an input, to apposite non linear functions. This orientation has persisted despite the fact that architectures featuring higher-order interactions potentially offer greater expressivity than those restricted to binary couplings. This is likely attributable to the ensuing parameters scaling and associated computational costs. For a typical feedforward neural network model, also termed \textit{Multi Layer Perceptron} (\textsc{mlp}), one can expect a parameter scaling of the order $O(N^2)$, for any given pair of nested layers of size $N$. Conversely, for a higher-order network with triadic interactions (include two body correlations from the departing layer) the total parameter count scales as $O(N^3)$. The latter approach is impractical due to the prohibitively high training times required for real-world applications. Furthermore, the necessity of navigating an extensive parameter space often results in suboptimal solutions. Despite various efforts to mitigate the parameter scaling problem,  \textsc{honn}s have generally failed to achieve the same level of mainstream adoption as multi-layer perceptron \textsc{mlp} models. Two notable examples include the aforementioned \textit{pi-sigma} units~\cite{Shin1991ThePN}, and $\Pi$-nets \cite{chrysos2021deep, chrysos2022augmenting}. 

In recent years, several machine learning subfields have emerged from the integration of various higher-order processing techniques. Prominent among these is the attention mechanism employed in modern transformer architectures; other notable examples include Gated Linear Units (\textsc{glu}) and factorization machines for recommendation systems \cite{Vaswani2017AttentionIA, Shazeer2020GLUVI, Rendle2010FactorizationM}. The field's most direct adoption of higher-order neural networks arguably occurs within Hypergraph Neural Networks (\textsc{hgnn}s) and Simplicial Neural Networks (\textsc{snn}s) \cite{Feng2018HypergraphNN, Ebli2020SimplicialNN}. Both architectures generalize Graph Neural Networks (\textsc{gnn}s) to inputs with higher-order structures by adapting the message-passing algorithms to hypergraphs. It should be noted, however, that these models not only suffer from the curse of dimensionality ($O(N^3)$ parameter scaling, a lower bound which applies to the most conservative setting where just triadic interactions are incorporated) but are also inherently constrained to datasets possessing an explicit hypergraph structure.

The objective of our work is to address the intrinsic limitations that have hindered the broader adoption and advancement of higher-order neural networks. 

This is achieved through \textit{Spectral Higher-Order Neural Networks} (\textsc{shonn}s), where triadic interactions are integrated into the standard \textsc{mlp} forward pass. The computational cost (as measured by the parameters scaling) is reduced to $O(N^2)$  due to an effective weight reparametrization that exploits the spectral attributes of the involved transfer matrices
\cite{giambagli2021machine}. Moreover, we demonstrate that \textsc{shonn}s possess full expressivity, as they are capable of universal approximation for any continuous function. In a manner similar to \textsc{hgnn}s and \textsc{snn}s, our proposed scheme operates on complete hypergraphs of neurons; consequently, higher-order interactions are distributed throughout the architecture rather than being confined to isolated functional blocks. 

In the following, we present the mathematical foundation of \textsc{honn}s, focusing on the specific case of triadic interactions. These serve as the fundamental building blocks of the higher-order neural structure. We will then turn to demonstrate their universal approximation properties, as anticipated above. Dedicated benchmark tests have been performed, both in classification and regression modalities. Our results suggest that higher-order networks, integrated with the spectral parametrization, achieve superior performance over traditional low-order counterparts, without incurring additional computational overhead.

\section{Results}
\subsection{Higher-Order Layers}\label{sec:higher_order_layers}

\begin{figure}[t]
     \centering
     \begin{subfigure}[b]{0.3\textwidth}
         \centering
         \input{perceptron1}
         \caption{Standard connectivity of a neural network.}
         \label{fig:left}
     \end{subfigure}
     \hfill 
     \begin{subfigure}[b]{0.3\textwidth}
         \centering
         \input{perceptron2}
         \caption{Standard higher-order (\textit{triadic}) forward propagation structure.}
         \label{fig:middle}
     \end{subfigure}
     \hfill
     \begin{subfigure}[b]{0.3\textwidth}
         \centering
         \input{perceptron3}
         \caption{Forward propagation of a spectral higher-order neural network.}
         \label{fig:right}
     \end{subfigure}
     
     \caption{Cartoon representing different neural architectures. The standard neural networks (panel \ref{fig:left}) with input neurons, $x_i$, generating  output signals, $y_k$, via weighted averages, i.e., linear combinations (colored arrows) followed by the application of a local nonlinearity (not shown). The standard triadic higher-order network (panel \ref{fig:middle}) is obtained by adding to the previous architecture, weighted sums of hyperlinks, mimicking the two body interaction $x_{i_1}x_{i_2}$ (symbolized by the curved arrows connecting a couple $(x_{i_1},x_{i_2})$ to an output $y_k$, each pair with its own specific color). The spectral higher-order networks (panel \ref{fig:right}), reduces the number of used parameters by exploiting the spectral decomposition. This yields an effective parameter sharing among hyperlinks (the curved arrows connecting a couple $(x_{i_1},x_{i_2})$ to several outputs $y_{k}$, share the same color).}
     \label{fig:forward-passes}
\end{figure}
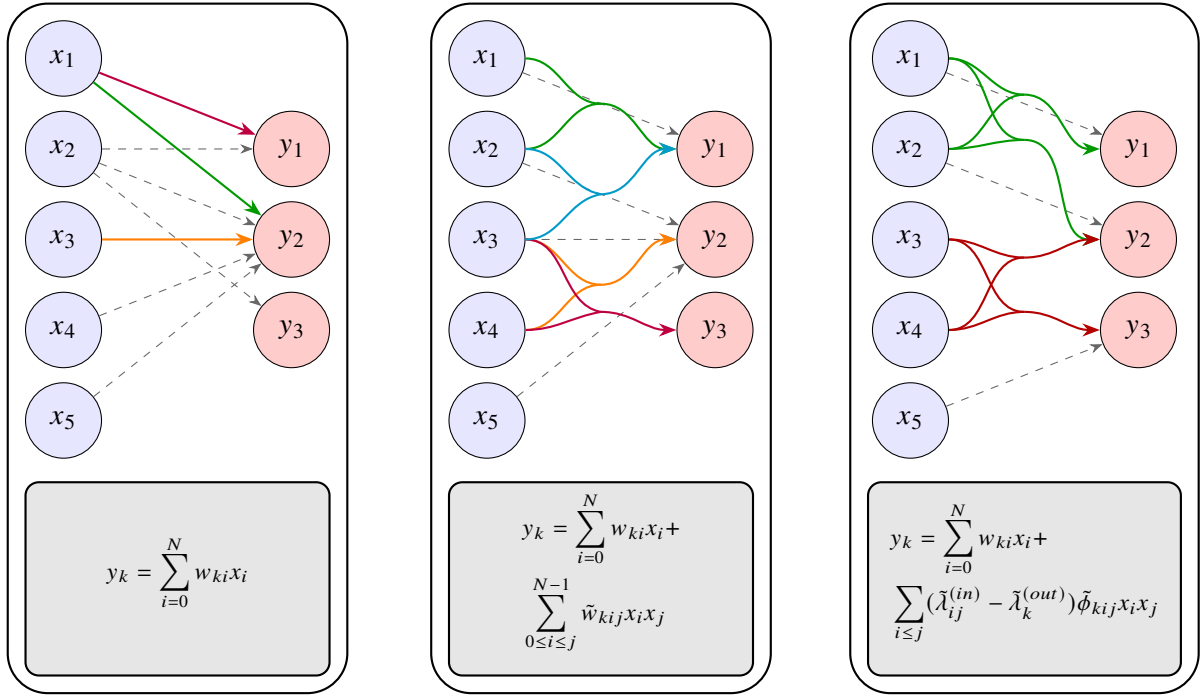

Consider a standard \textsc{mlp} and focus on two adjacent layers. Assume $x_i$ to denote the signal associated to the departing layer, made of $N$ individual units. The  signal $y_k$ referred to the $k$-th neuron at the arriving layer can be computed as follows the usual forward propagation, namely:
\begin{equation}
    y_k = \sum _{i=0}^N w_{ki}x_i,
    \label{eq:1-simplex}
\end{equation}
where $w_{ki}$ is the $k,i$-th element of the rectangular weight matrix $W_{K \times N+1}^{(1)}$ and $K$ stands for the size of the second layer (see panel (a) Fig.~\ref{fig:forward-passes}). Notice that the sum in~\eqref{eq:1-simplex} runs on $N+1$ elements, because we have chosen to deploy the scalar bias via a dedicated \textit{bias neuron}, $x_N$. An element-wise non-linearity $\sigma : \mathbb{R}^K \to \mathbb{R}^K$ is then applied to the obtained activation vector $\bm{y}=(y_1,\dots,y_K)^\top$, to expand the model's hypothesis space.

We can extend the above scheme beyond $1$-simplexes by accounting for higher-order interactions between layers. We begin by adding a quadratic term and thus recast ~\eqref{eq:1-simplex} in the form: 
\begin{equation}
    y_k = \sum _{i=0}^N w_{ki}x_i + \sum _{0 \leq i \leq j}^{N-1} \tilde{w}_{kij}x_ix_j\, ,
    \label{eq:2-simplex}
\end{equation}
where $\tilde{w}_{kij}$ is now the element of a tridimensional tensor $K \times N \times N$. See middle panel of Fig.~\ref{fig:forward-passes} for a graphical representation of the ensuing pattern of interaction. Without loss of generality, we have chosen not to include a bias term in the higher-order interaction defined in \eqref{eq:2-simplex}. Henceforth, we shall assume this distinction without further mention. Notably, the non-linearity is integrated directly into the network topology, thereby obviating the requirement for an explicit non linear post-processing activation step - a sensible departure from standard architectures.

Model~\eqref{eq:2-simplex} can be seen as a second order approximation of the full  \textsc{honn} expansion~\cite{Giles1987LearningIA}
\begin{equation}
    y_k = w_{k}^{(0)} + \sum_{i} w_{ki}^{(1)} x_i + \sum_{i,j} w_{kij}^{(2)} x_i x_j + \sum_{i,j,\ell} w_{kij\ell}^{(3)} x_i x_j x_\ell + \dots \, .
    \label{eq:historical}
\end{equation}
However, it should be noted that in practical implementations, the aforementioned sum must be truncated; under these conditions, the expressive power of~\eqref{eq:historical} remains to be proven. Additionally, the rapid growth in the parameters' count renders any practical implementation computationally prohibitive. From a more general perspective, model~\eqref{eq:historical} can be interpreted as the Taylor expansion~\footnote{By setting $n=2$ one can approximate $f(x_i, x_j; \bm{\theta})$ as follows
\begin{equation}
    f(x_i, x_j; \bm{\theta}) = f(\bm{0}; \bm{\theta})
    + \frac{\partial f}{\partial x_i} x_i
    + \frac{\partial f}{\partial x_j} x_j
    + \frac{1}{2} \left[  \frac{\partial^2 f}{\partial x_i^2}x_i^2 
    + 2  \frac{\partial^2 f}{\partial x_i \partial x_j} x_i x_j
    +  \frac{\partial^2 f}{\partial x_j^2}x_j^2 \right] + ... \,.
\end{equation}
The $0$-th and $1$-th orders can be identified with the bias and the linear term, i.e., a standard \textsc{mlp}. The truly novel term, is the second order one, $x_ix_j$, $i,j\in\{1,2\}$.} of the following non linear update rule
\begin{equation}
    y_k = \sum _{i=0}^N w_{ki}x_i + \sum _{0 \leq i \leq j \leq ... \leq n}^{N-1} f\big(x_i,x_j,...,x_n; \bm{\theta}(k,i,j,...,n)\big)\, ,
    \label{eq:general}
\end{equation}
where $f$ is a generic coupling function and  $\bm{\theta}(k,i,j,...,n)$  sets the pattern of active higher order correlations. As we shall prove, model choice~\eqref{eq:2-simplex} possesses universal approximation capabilities, provided the transformation is iteratively applied through a deep sequence of hidden layers. Stated differently, one can approximate any continuous function by coupling sufficiently many quadratic interactions of the type accommodated for in~\eqref{eq:2-simplex} (see Appendix \ref{sec:direct_space_universal_theorem}). This implies a drastic decrease in complexity compared to~\eqref{eq:historical}, thus enabling more scalable implementations.

A primary factor hindering the widespread adoption of the latter models within the deep learning community has to do with a phenomenon closely related to the curse of dimensionality. The number of parameters needed for an usual \textsc{mlp} network of fixed depth grows as $N^2$, where $N$ refers to size of the layers. The triadic hyperedge model suffers from an $O(N^3)$ parameter explosion, which substantially increases the computational burden in practical applications.

The main result of this work is a solution to this latter problem, via an efficient reparameterization scheme that induces weight sharing across the architecture. The recipe represents a generalization of the \textit{spectral parametrization} which has been recently proposed as a viable alternative to the usual optimization frameworks~\cite{chicchi2021} (see section \ref{sec:the_spectral_parametrization}). Applied to the classical \textsc{mlp}~\eqref{eq:1-simplex}, the spectral formulation yields
\begin{equation}
    y_k = \sum _{i} (\lambda_i^{(in)}-\lambda _k^{(out)})\phi_{ki}x_i\, ,
    \label{eq:spectral-param}
\end{equation}
where elements $\phi_{ki}$ define the eigenvectors of a square transfer operator,  $\bm{\lambda}^{(in)} \in \mathbb{R}^{N}$ and $\bm{\lambda}^{(out)} \in \mathbb{R}^K$ the associated eigenvalues, referred to departure and arrival nodes, respectively. By optimizing only the eigenvalues and preserving the original eigenvectors the cost is $O(N)$, we achieve thus a significant efficiency gain over the original setting, which requires direct weight optimization. By extending the spectral paradigm to accommodate higher-order forward propagation through triadic interactions, as specified in~\eqref{eq:2-simplex}, one eventually obtains: 
\begin{equation}
    y_k = \sum _{i} (\lambda_i^{(in)}-\lambda _k^{(out)})\phi_{ki}x_i + \sum _{i\leq j} (\tilde{\lambda} _{ij}^{(in)}-\tilde{\lambda} _k^{(out)})\tilde{\phi}_{kij}x_ix_j\, .
    \label{eq:extended-spectral-param}
\end{equation}

The derivation of the above result is given in the Methods section. A pictorial representation of the spectral higher order algorithm is provided in the right panel of Fig.~\ref{fig:forward-passes}. Hold constant  the scalar quantities $\tilde\phi_{kij}$ defined within the triadic summation, and just train the generalized eigenvalues $\tilde \lambda_{ij}^{\text{(in)}}$, $\tilde \lambda_k^{\text{(out)}}$ together with the entire parameter suite associated with the linear forward pass (both the eigenvectors and the eigenvalues). Consequently, the parametric complexity required to optimize the generalized model—including two-body interactions—is reduced to 
$O(N^2)$, in contrast to the $O(N^3)$ scaling inherent in a naive implementation. We refer to Appendix~\ref{sec:parameters_scaling} for a comprehensive analysis of the involved parameter scalings. In summary, by adopting the spectral ansatz, the triadic higher-order model can be trained with the same parametric complexity as a standard \textsc{mlp} \footnote{Recall that, under the family of \textsc{gd} optimizers routinely used for deep learning application, every omitted parameter represents a corresponding reduction in the total count of partial derivatives required during backpropagation.}. In the following section, we will begin to explore the practical side of dealing with a  spectral triadic architecture. As a first pedagogical example we shall focus on a perceptron, a simple two layers module, run against benchmark datasets for classification tasks. 

\subsection{Triadic Perceptrons}

\begin{figure}[t]
    \centering
    
    \begin{subfigure}{\linewidth}
        \begin{tcolorbox}[colback=white, colframe=black!50, arc=4mm, auto outer arc, boxrule=1pt]
            \centering
            \includegraphics[width=0.98\linewidth]{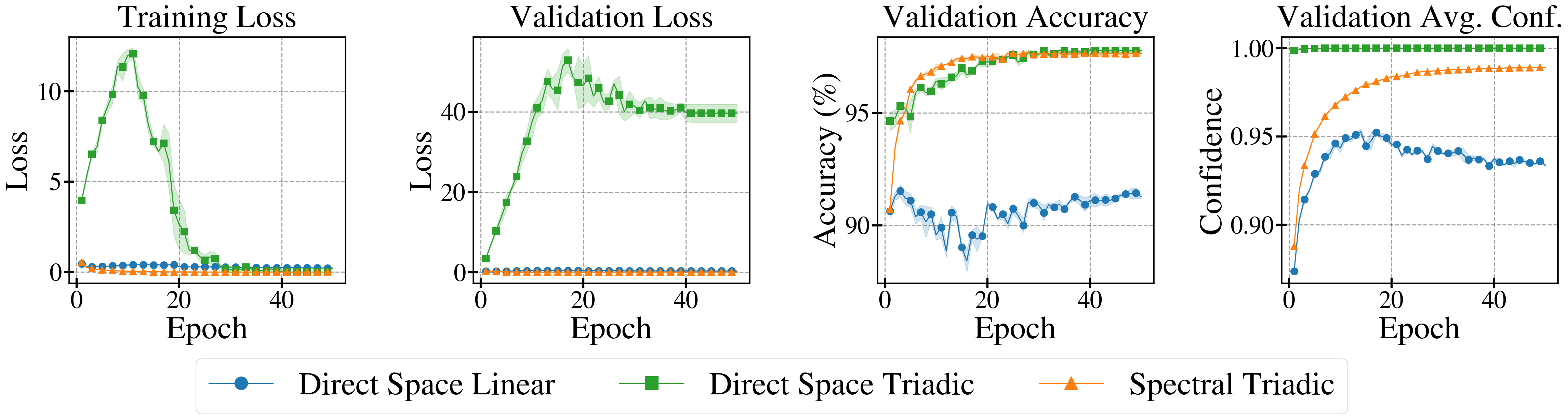}
            \caption{Results on \textsc{mnist}}
            \label{fig:mnist}
        \end{tcolorbox}
    \end{subfigure}
    
    \vspace{1em}
    
    \begin{subfigure}{\linewidth}
        \begin{tcolorbox}[colback=white, colframe=black!50, arc=4mm, auto outer arc, boxrule=1pt]
            \centering
            \includegraphics[width=0.98\linewidth]{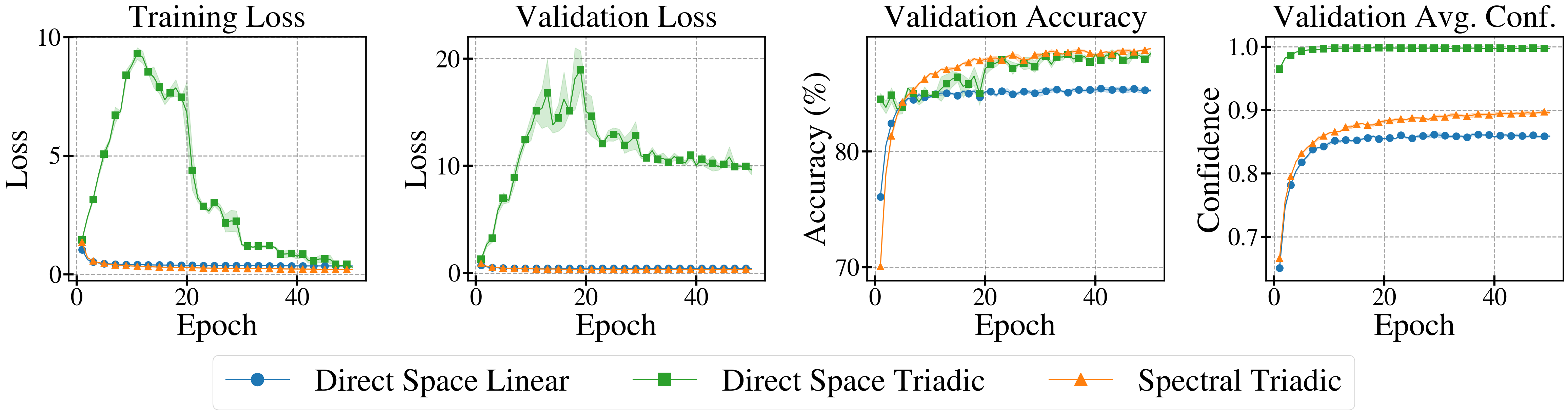}
            \caption{Results on \textsc{fashion-mnist}}
            \label{fig:fashion_mnist}
        \end{tcolorbox}
    \end{subfigure}
    
    \vspace{1em}
    
    \caption{Results of the perceptrons' training on \textsc{mnist} and \textsc{fashion-mnist} (via \textit{Adam} optimizer). We performed \textit{learning rate warm-up}, followed by a \textit{reduce-learning-rate-on-plateau} protocol, to guard against a possible dependence of the results on an unlucky hyperparameter choice. The direct space triadic model is plagued by instability and confidence saturation problems, while the standard perceptron lacks in expressivity. At variance, the spectral triadic model (i) achieves the same performances of the standard triadic network, with a substantially more efficient parameter scaling and (ii) it is also way more stable.}
    \label{fig:combined_mnist}
\end{figure}

To assess the fundamental performance of the \textsc{shonn} framework, we first conduct tests on a single layer perceptron, operated in classification mode against the \textsc{mnist} dataset \cite{Rosenblatt1958ThePA, lecun1998mnist}. The results are reported in Figure \ref{fig:mnist}, and show clear performance and stability gaps, favoring the spectral triadic model. Note that for these, and subsequent, experiments we trained $3$ separate models for each architecture, to acquire an uncertainty measure. To ensure the robustness of our findings we evaluated the perceptron models on \textsc{fashion-mnist} \cite{Xiao2017FashionMNISTAN}. The results, displayed in Figure \ref{fig:fashion_mnist}, corroborate our previous conclusions.

Even from these preliminary tests, it is clear that the spectral paradigm not only substantially improves parameter scaling but also mitigates the numerical instability and gradient saturation issues inherent in standard higher-order neural networks trained in direct space.  In the next section we will turn to considering the multi-layered generalization of the two layered models.

\subsection{Triadic \textsc{mlp}s}
\label{sec:mlps}

\begin{figure}[t]
    \centering
    \includegraphics[width=0.8\textwidth]{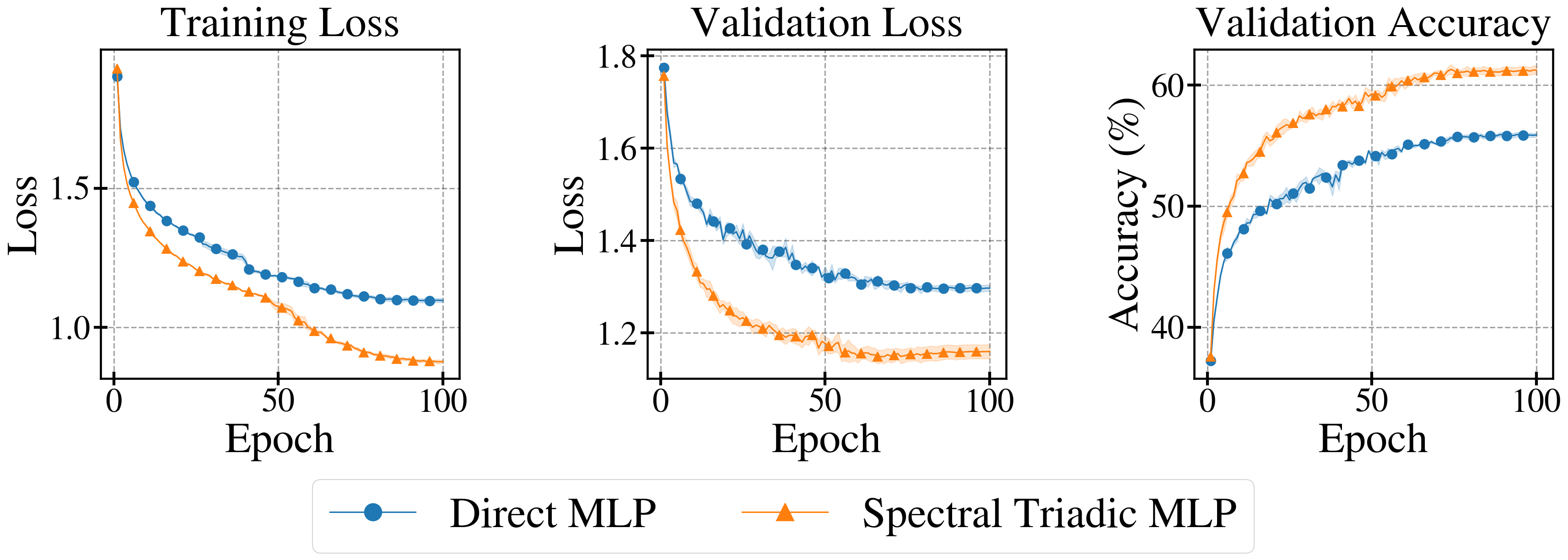}
    \caption{Standard \textsc{mlp} vs. spectral triadic \textsc{mlp} on CIFAR-10. The models were trained with the \textit{Adam} optimizer, following a \textit{halving-lr-on-plateau} scheduler. The results show a clear advantage for the triadic architecture.}
    \label{fig:cifar10-mlp}
\end{figure}

\begin{figure}[t]
    \centering
    \includegraphics[width=0.8\textwidth]{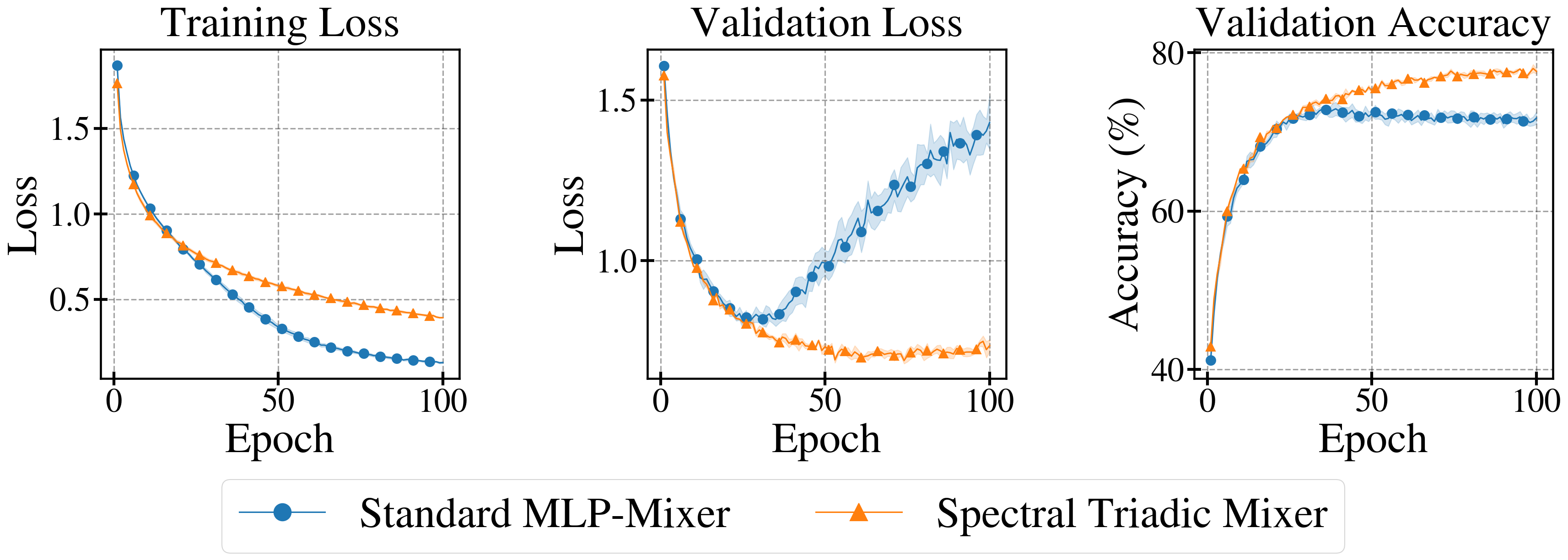}
    \caption{Standard \textsc{mlp-mixer} vs a spectral triadic version of it, on CIFAR-10. For this experiment the \textit{Adam} optimizer was used with fixed learning rate. From the results, it seems that the spectral forward propagation not only shows a sufficient degree of expressivity, but also acts as an implicit regularizer, preventing overfitting.}
    \label{fig:cifar10-mixer}
\end{figure}

Perceptrons can be staked recursively to create \textit{Multi Layer Perceptrons} with increased expressivity power. The same applies to triadic perceptrons of the type analyzed above. A standard $3$-layer neural network is the simplest \textsc{mlp} architecture
\begin{equation}
    y_k = \sum _i^H w_{ki}^{(2)}\sigma\left(\sum _j w_{ij}^{(1)} x_j\right)\, ,
    \label{eq:standardmlp}
\end{equation}
where $w_{ij}^{(1)},w_{ki}^{(2)}$ are the elements of the weight matrices of the first and second layer respectively, and $\sigma$ is an appropriate non-linear filter (usually a \textit{ReLU} function) to potentiate the degree of expressivity beyond the trivial linear setting. It can be proven that~\eqref{eq:standardmlp} approximate any continuous function with arbitrary precision as long as the hidden dimension $H$ is large enough, given the complexity of the task~\cite{LESHNO1993861}. This high degree of expressiveness, however, entails an interpretability trade-off; it is notoriously difficult to elucidate the inner workings of this class of models.

From~\eqref{eq:2-simplex}, it is evident that an elementary triadic perceptron can represent quadratic mappings, offering a clear advantage in expressivity over standard perceptron models. Furthermore, by hierarchically stacking triadic perceptrons a multi-layer configuration reveals a notable progression in the model's hypothesis space, e.g., a $2$-layer triadic \textsc{mlp} is able to represent polynomial mappings up to the fourth degree. In this sense, it proves equivalent to a higher-order perceptron with pentadic interactions (5-body interactions, a further step up in the higher-order hierarchy), but with a significantly lowered parameter count. It can be shown that a $N$-layer triadic \textsc{mlp} is able to represent any polynomial with degree $2^{N-1}$ (see Methods section for more details). 
Building upon this result, we demonstrate that any continuous function can be represented by a sufficiently wide and deep triadic \textsc{mlp}. This holds for both the direct-space parametrization in Eq.~\eqref{eq:2-simplex} (see Appendix~\ref{sec:direct_space_universal_theorem}) and the spectral parametrization, as discussed in Section \ref{sec:expressiveness_TMLP}. Recall that this latter setting comes with a concomitant compression of the relevant parameter space.
We also remark that the incremental expressivity, as gained by progressively adding more triadic layers, could enhance the interpretability of the proposed framework.

To assess empirically the capabilities of spectral higher-order \textsc{mlp}s, we tested their performances on CIFAR-10 \cite{Krizhevsky2009LearningML}. Specifically, we compared the run on a standard, $4$ layers, \textsc{mlp} architecture, equipped with a non-linear activation function (ReLU), against an equivalently shaped spectral triadic \textsc{mlp}. The results are shown in Figure~\ref{fig:cifar10-mlp}, and remarkably indicate better performances for our novel architecture, despite the absence of a point-wise non topological activation function $\sigma$.

Having established the reliability of the triadic \textsc{mlp}, it is important to highlight its potential for integration into existing architectures. One may in fact consider replacing conventional \textsc{mlp} modules with triadic layers to effectively boost model capabilities. To substantiate this claim, we considered a state-of-the-art architecture for vision, the \textsc{mlp-mixer}~\cite{Tolstikhin2021MLPMixerAA}, which we modified by replacing \textsc{mlp} modules with triadic spectral perceptrons. Even in such vastly different context, spectral higher-order networks exhibit a significant performance advantage, as clearly depicted in Figure \ref{fig:cifar10-mixer}. We refer the reader to Appendix \ref{sec:regression} for extended regression analysis, which confirms the above conclusions.

\section{Discussion}
\label{sec:discussion}
We here introduced the spectral parameterization for higher-order, triadic, forward propagation. This new framework opens up the perspective to incorporate hyperlinks into feedforward neural networks, without sacrificing the customary $O(N^2)$ scaling. 
Indeed, to the best of our knowledge, the proposed spectral parametrization exhibits the most favorable parameter scaling among all existing higher-order neural network architectures. Our experiments demonstrate that the newly proposed  hyper-linked neural architectures exhibit high efficacy even in the absence of explicit, pointwise activation functions, while simultaneously mitigating the stability issues typically associated with triadic configurations trained in direct space. The introduced class of networks also possess a highly configurable hypothesis space: indeed, the maximum complexity of the expressed mapping can be tailored by adjusting the network depth. This tunability can be extremely useful in context where the  polynomial order of the targeted mapping is approximately known. For this reason, we identify the application of these networks to physical contexts (\textit{e.g.} via \textsc{pinn}s \cite{Raissi2019PhysicsinformedNN}) as a key future research direction.  Furthermore, the results referred to deep architectures (\textsc{mlp}s, \textsc{mlp-mixer}s) already indicate that higher-order spectral networks  outperform their standard, direct space, counterparts. At last, we have derived universal approximation theorems for this class of triadic architectures, rigorously granting their expressive power. The collective bulk of theoretical and empirical results here presented clearly shows that elements of hypergraph theory can be fruitfully integrated into deep learning frameworks to enhance model performance. Future research is required to comprehensively evaluate the full potential of this interdisciplinary framework.

\section{Methods}
\subsection{The Spectral Parametrization} \label{sec:the_spectral_parametrization}
Assume a weight matrix $W_{K \times N}$ of the links between neural layers of size $N$ and $K$ respectively. The spectral parametrization approach \cite{giambagli2021machine} focuses on the implied adjacency matrix between the $N+K$ neurons of the underlying bipartite graph:
\begin{equation}
A = \begin{bmatrix}\mathbb{O}_{N \times N} & \mathbb{O} _{N \times K}\\W_{K \times N} & \mathbb{O}_{K \times K}\end{bmatrix}
\end{equation}
The $A$ matrix operates the forward pass on the global activation vector of the bipartite graph, usually called $\bm{a}$. Following the action of $A$ on $\bm{a}$, the signal that interests the neurons of the last layer is eventually stocked in the final $K$ entries of $\bm{a}$. The forward pass of information mediated by $A$ is independent of its diagonal elements. This allows us to define an augmented version of $A$ with non-zero diagonal entries, which serves as the fundamental operator for implementing the spectral parametrization. Specifically we can express the modified adjacency matrix in the form $A=\Phi \Lambda \Phi ^{-1}$, with $\Lambda \in \mathbb{R}^{(N+K) \times (N+K)}$ diagonal eigenvalues matrix, and with $\Phi$:
\begin{equation}
    \Phi = \begin{bmatrix}\mathbb{I}_{N \times N} & \mathbb{O} _{N \times K}\\\phi_{K \times N} & \mathbb{I}_{K \times K}\end{bmatrix}
\end{equation}
where $\mathbb{I}$ denotes the identity matrix. At last, given the properties of $\Phi^{-1}$, is possible to show \cite{giambagli2021machine} that the elements $w_{ki}$ which populate the sub-diagonal block of $A$ can be rewritten as:
\begin{equation}
    w_{ki} = (\lambda _i^{(in)} - \lambda _k^{(out)}) \phi_{ki}
\end{equation}
Where $\lambda _i^{(in)}$ is the $i$-th element of the first portion of the eigenvalue diagonal, and $\lambda _k^{(out)}$ is the $k$-th of the last one.

This novel parametrization for the network's weights offers the versatility to achieve a multitude of different goals, spanning  pruning techniques, input feature relevance detection, architecture search strategies, and, as we shall see, parameters' reduction paradigms \cite{chicchi2021, buffoni2022spectral, chicchi2024automatic, Peri2025}.

\subsection{Spectral Parametrization of Triadic Networks}

In equation \eqref{eq:2-simplex} the $\tilde{w}_{kij}$ tensor element can be thought as a matrix element, $\tilde{w}_{ki'}$: here $i'$ represents an appropriate index which enumerates the set of unordered pairs $(i,j), \ 0 \leq i \leq j<N$ in lexicographical order:
\[
(0,0) \to 0, \ (0,1) \to 1, \ ..., \ (0,N-1) \to N-1, \ (1,1) \to N, \ ..., \ (N-1,N-1) \to \frac{N(N+1)}{2} - 1
\]
In close form $i'$ is given by:
\begin{equation}
i' = j-i+\sum _{k=0}^{i-1}(N-k).
\label{eq:lexicographical_map}
\end{equation}
The lexicographical bijection \eqref{eq:lexicographical_map} allows the deployment of the spectral parametrization for triadic interactions: we first use \eqref{eq:lexicographical_map} to map the tridimensional weight tensor $[\tilde{w}_{kij}]$ into the weight matrix $[\tilde{w}_{ki'}]$; we can then simply express $[\tilde{w}_{ki'}]$ via the spectral parametrization. From \eqref{eq:2-simplex} we get:
\begin{equation}
    y_k = \sum _{i} (\lambda_i^{(in)}-\lambda _k^{(out)})\phi_{ki}x_i + \sum _{i'} (\tilde{\lambda} _{i'}^{(in)}-\tilde{\lambda} _k^{(out)})\tilde{\phi}_{ki'}x_{i'}
\end{equation}
with $x_{i'} = x_ix_j$. Finally we can expand $i'$ back to $2$ dimensions:
\begin{equation}
    y_k = \sum _{i} (\lambda_i^{(in)}-\lambda _k^{(out)})\phi_{ki}x_i + \sum _{0 \leq i \leq j} (\tilde{\lambda} _{ij}^{(in)}-\tilde{\lambda} _k^{(out)})\tilde{\phi}_{kij}x_ix_j,
\end{equation}
to eventually obtain equation \eqref{eq:extended-spectral-param}.

\subsection{Expressiveness of a Triadic Multilayer Perceptron}
\label{sec:expressiveness_TMLP}

In this section, we examine the expressivity of the triadic \textsc{mlp} and establish a universal approximation result for the proposed architecture.

Let us consider a neural network layer with propagation rule from input $\bm{x}$ to output $\bm{y}$ given by Eq \eqref{eq:2-simplex}. For sake of clarity and without loss of generality, the input bias term is now (and just in this section) incorporated in the triadic interaction. The following expression thus holds:
\begin{equation}
    \label{eq:honn}
    y_k = \sum_i w_{ki}x_i + \sum_{i\leq j} \tilde w_{kij}x_ix_j
\end{equation}
Further bias terms of subsequent triadic layers will not be considered, just as exposed in Sec. \ref{sec:higher_order_layers}. 
In short, we will denote the above transformation with the compact notation $\bm{y} = \text{HONN} (\bm{x})$.

According to Eq. \eqref{eq:extended-spectral-param}, impose the \textit{spectral parametrization} of the weights $\{w_{ki}\}$ and $\{\tilde w_{kij}\}$ amounts to write:
\begin{equation}
\label{eq:spectral_parametrization}
    \begin{aligned}
    w_{ki} &= \left(\lambda_{i}^{\text{(in)}}-\lambda_{k}^{\text{(out)}}\right) \phi_{ki}\\
    \tilde w_{kij} &= \left(\tilde\lambda_{ij}^{\text{(in)}}-\tilde\lambda_{k}^{\text{(out)}}\right)\tilde \phi_{kij},
    \end{aligned}
\end{equation}
where 
\begin{equation}
    \begin{aligned}
        &\lambda_{i}^{\text{(in)}}, \lambda_{k}^{\text{(out)}}, \phi_{ki},\tilde\lambda_{ij}^{\text{(in)}}, \tilde\lambda_{k}^{\text{(out)}} \text{ are tunable},\\
        &\tilde \phi_{kij}\, \text{ are fixed to the initialization values.}
    \end{aligned}
    \label{eq:spectral_constraints}
\end{equation}
As already mentioned, this choice enables one to effectively reduce the number of tunable parameters, which now scales as $O(N^2)$. In essence, training the spectral version of an higher order neural network of the type defined by Eq. (\ref{eq:honn}) comes with a computational cost (as quantified by parameters' scaling) comparable to that associated with standard \textsc{mlp}. Starting from this setting we will prove the following Theorem.

\begin{theorem}
\label{th:universal_approximation_th}
Given $\bm x \in X \subset  \mathbb{R}^n$, the space $\mathcal{H}$ of functions $h(1, \bm{x})$, defined by iteratively composing \textsc{honn} layers of the form of Eq.~\eqref{eq:honn} with the parametrization defined by Eqs.~\eqref{eq:spectral_parametrization},~\eqref{eq:spectral_constraints}, is dense in $C(X, \mathbb{R}^n)$, namely the space of real-valued continuous functions on the compact set $X$.
\end{theorem}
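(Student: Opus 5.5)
The plan is to reduce the statement to the Stone--Weierstrass theorem. Polynomials in $x_1,\dots,x_n$ are dense in $C(X,\mathbb{R})$ on a compact $X$, and approximation of vector-valued targets follows componentwise. It therefore suffices to show two things:
\begin{itemize}
\item[(i)] every polynomial map can be represented \emph{exactly} by a finite composition of \textsc{honn} layers of the form~\eqref{eq:honn};
\item[(ii)] the spectral constraints~\eqref{eq:spectral_parametrization},~\eqref{eq:spectral_constraints} do not destroy this representability.
\end{itemize}
The constant $1$ carried in $h(1,\bm x)$ supplies the bias. A layer acting on $(1,\bm x)$ can then output $1$, any $x_i$, and any product $x_ix_j$ or $x_i$ (the products being taken with the constant $1$). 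Hence any quadratic polynomial is available in one layer.

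\textbf{Step 1: exact representation in direct space.} I would prove by induction on depth $L$ that a composition of $L$ layers represents any polynomial map of degree at most $2^{L}$. The last layer must produce the final linear combination, so the count is adjusted by one relative to the paper's statement of degree $2^{N-1}$ for $N$ layers. The inductive step has three parts:
\begin{itemize}
\item Keep the constant $1$ propagating through every layer, via a unit whose output is $1\cdot 1$ or a linear copy of the constant.
\item Compute all monomials of degree at most $2^{L}$ by forming pairwise products of monomials of degree at most $2^{L-1}$ that are already present in the previous layer, using wide layers with one unit per monomial.
\item Finish with a purely linear combination.
\end{itemize}
Identity pass-through of lower-degree monomials uses the linear weights. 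This yields, for every polynomial $p$ of degree $d$, a finite-depth and finite-width network with $h\equiv p$.

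\textbf{Step 2: the spectral parametrization.} This is the main obstacle, since the $\tilde\phi_{kij}$ are frozen. The idea is that for each output $k$ the map
\[
(\tilde\lambda^{(in)},\tilde\lambda^{(out)}_k)\mapsto \bigl(\tilde\lambda^{(in)}_{ij}-\tilde\lambda^{(out)}_k\bigr)\tilde\phi_{kij}
\]
has only $O(N^2)$ degrees of freedom and cannot reach arbitrary $\tilde w$. We must therefore design the architecture so that the required quadratic weights are achievable. I would work as follows:
\begin{itemize}
\item Assume generic initialization, with all $\tilde\phi_{kij}\neq 0$.
\item Route each needed product through a dedicated output unit $k$. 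Since the in-eigenvalue $\tilde\lambda^{(in)}_{ij}$ is shared across $k$, set $\tilde\lambda^{(in)}_{ij}=0$ for all pairs.
\item For each unit $k$, choose $\tilde\lambda^{(out)}_k$ so that $-\tilde\lambda^{(out)}_k\tilde\phi_{kij}$ has the desired scale on its target pair.
\end{itemize}
This leaves the unwanted pairs $(i,j)$ with nonzero weight $-\tilde\lambda^{(out)}_k\tilde\phi_{kij}$. I would handle these by exploiting the freely trainable linear part together with width. Each hidden unit then outputs a fixed known quadratic form $q_k(\bm x)=-\tilde\lambda^{(out)}_k\sum\tilde\phi_{kij}x_ix_j$ plus an arbitrary linear term. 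By taking more units than the $N(N+1)/2$ monomials and using generic $\tilde\phi$, the quadratic forms $\{q_k\}$ span the space of all quadratic forms in the layer's inputs. A subsequent trainable linear layer, whose $\phi_{ki}$ and $\lambda$ are free so that the linear weights are arbitrary, recombines them into any desired quadratic. To do this recombination, the next layer's quadratic part must be switched off, by setting $\tilde\lambda^{(in)}\equiv\tilde\lambda^{(out)}\equiv 0$.

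The linear part itself is unconstrained. Given any target $w_{ki}$, take $\lambda^{(in)}=0$, $\lambda^{(out)}_k=-1$ and $\phi_{ki}=w_{ki}$. Each direct-space layer is thus emulated by two spectral layers, so Step 1 transfers with doubled depth and the density follows. The delicate points are the genericity argument for the spanning of $\{q_k\}$, which holds for almost every initialization, and ensuring that the constant $1$ and the pass-through channels survive the frozen quadratic couplings. For the latter I would reserve units whose $\tilde\lambda^{(out)}_k$ is tuned to zero, after again setting $\tilde\lambda^{(in)}\equiv 0$.
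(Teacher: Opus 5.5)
Your proof is correct, but it takes a different route from the paper's.

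\textbf{The paper's route.} The paper never passes through the direct-space construction. It runs a ``ladder'' induction in which each spectral layer adds exactly one new monomial $\underline e_{\bar h}\underline e_{\bar k}$ to a basis $\{\underline e_0,\dots,\underline e_q\}$ that is already present. It then shows that the $(q+2)\times(q+2)$ coefficient matrix relating $\bm y$ to $\{\underline e_0,\dots,\underline e_q,\underline e_{\bar h}\underline e_{\bar k}\}$ can be made upper triangular with nonzero diagonal. This is done by choosing the free $\phi_{ij}$ to cancel the sub-diagonal bias-pair entries $\tilde w_{i0j}$. The only requirements on the quadratic part are $\tilde\lambda^{(out)}_{q+1}\neq\tilde\lambda^{(in)}_{\bar h\bar k}$ and $\tilde\phi_{q+1,\bar h\bar k}\neq 0$. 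A linear layer encoding $W^{-1}$ then restores the enlarged canonical basis. Iterating up to $D=\binom{m+n}{n}$ elements yields $\mathcal P_m$.

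\textbf{Your route.} You prove a modular emulation lemma instead. With $\tilde\lambda^{(in)}\equiv 0$, each unit carries the frozen form $-\tilde\lambda^{(out)}_k\sum_{i\le j}\tilde\phi_{kij}u_iu_j$, which you can rescale or switch off unit by unit. A second, purely linear layer recombines these forms. Hence two spectral layers reproduce an arbitrary direct-space layer, and the doubling construction of Appendix~A transfers verbatim. Your first bullet about dedicated units is superseded by the spanning argument, which is the part that does the work.

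\textbf{What your route buys.} Depth grows logarithmically in the degree rather than linearly in $D$. You give an explicit account of how unwanted quadratic couplings are neutralised and how pass-through channels, including the constant, are protected. The paper simply ``restricts to the case where just one non-linear term is produced'', which tacitly requires setting $\tilde\lambda^{(in)}_{ij}$ equal to a common value of all $\tilde\lambda^{(out)}_k$ on every other pair. Your reduction also transfers any direct-space expressivity statement, not just polynomial density.

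\textbf{What it costs.} Your hypothesis on the frozen tensor is full column rank of $(\tilde\phi_{kp})$ over the pair index $p$. This holds for almost every initialization, but it is much stronger than the paper's requirement of a single nonzero frozen entry per step. Your widths are also quadratic in the preceding width, whereas the paper's layers grow by one unit at a time. Both arguments need some nondegeneracy of the frozen $\tilde\phi$ that the theorem statement leaves implicit.
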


Leveraging the celebrated \textit{Stone-Weierstrass} Theorem, the proof of Theorem \ref{th:universal_approximation_th} can be articulated by showing that:
\begin{equation}
    \mathcal{P}_m(x_1, \cdots, x_n) \subseteq \mathcal{H}\, ,
\end{equation}
where $\mathcal{P}_m(x_1, \cdots, x_n)$ stands for a generic $m$-degree polynomial in  $\mathbb{R}^n$.

In Appendix \ref{sec:direct_space_universal_theorem} 
we will present also a version of this result in  direct space, namely without the constraints imposed by Eqs \eqref{eq:spectral_parametrization}-\eqref{eq:spectral_constraints}. In this latter setting, the architecture aligns with those outlined in \cite{chrysos2021deep, chrysos2022augmenting}. While this result may be intuitive, to the best of our knowledge, no formal proof has yet been documented in the literature.

Before demostrating Theorem \ref{th:universal_approximation_th} in its full generality, we first outline the proof strategy in a simplified case where the input is restricted to a scalar, one-dimensional variable, namely $\mathbf{x} = (x) \in \mathbb{R}$, supplemented with a trivial bias term, $1$. The proof proceeds by iteratively constructing polynomials of increasing degree through the composition of successive layers in the \textsc{honn}.

The first step (see Lemma~\ref{th:1D}) is devoted to the creation of polynomials of second degree. Let us remember that we start with $1$, $x$ (and their combinations), hence we pass from a first degree polynomial to polynomials with degrees one step higher, so to increase expressivity. It should be noted that we require these second-order polynomials to be capable of representing {\it any} arbitrary second-order polynomial.
Our goal is to demonstrate the existence of a parameter configuration for a \textsc{honn} with three output nodes which is capable to generate three linearly independent second-order polynomials, denoted as $y_0(x)$, $y_1(x)$ and $y_2(x)$. The sought claim will be hence indirectly proven. Indeed we will show that a second set of weights, formally a new layer added to the \textsc{honn}, exists such that the standard basis elements of $\mathcal{P}_2(x)$, i.e., $1$, $x$ and $x^2$, can be obtained from linear combinations of $y_j(x)$ that make use of the above mentioned weights (see Fig.~\ref{fig:simple_expressivity_spectral_1D} for a pictorial representation of the claim). Once this step is achieved, we can iterate the construction by building polynomials of degree three upon adding extra layers, and so forth up to any desired degree.

Let us conclude this introduction with two remarks. First, the construction here presented can be extended to build polynomials of arbitrary degree in $n$ variables (see Fig.~\ref{fig:simple_expressivity_spectral} for a pictorial representation and Lemma~\ref{th:add_one_element_to_base} and~\ref{th:polinomial_generation} for a formal proof). Second, the mechanism by which we prove expressivity does not necessarily reflect the configuration produced by an optimization process acting directly on the \textsc{honn} weights. This, however, poses no limitation, as our primary objective is to establish the existence of such expressivity rather than the recovery of the "optimal" parameter setting.

In the case of input $(1, x)$, the higher-order nonlinear interaction generates a term that scales like $x^2$, in the output function $\bm y = \text{HONN}(1,x)$. In the following we will show, that the spectral parameters governing the update rule can be assigned so to map  
$\bm y$  into a basis for $\mathcal{P}_2(x) = \text{Span}\{1,x,x^2\}$.

\begin{figure}[t]
    \centering
    \input{simple_expressivity_spectral_1D}
    \caption{Graphical sketch of Lemma \ref{th:1D}, illustrating the inter-layer connections $(1, x) \to \bm y$ described in Eq.~\eqref{eq:associate_matrix_1d_x^2}, under the triangular ansatz and in the simple setting where the only nonlinear interactions arise from the last column of matrix $W$. Subsequently, the elements of $\bm y$ can be linearly remapped to the canonical basis of $\mathcal{P}_2(x)$. The black dashed arrows represent the linear passage corresponding to the first sum term of Eq. \eqref{eq:honn}, while the orange solid lines represent the higher-order interactions, namely second sum in Eq. \eqref{eq:honn}, here just restricted to $x$ and itself.}
    \label{fig:simple_expressivity_spectral_1D}
\end{figure}
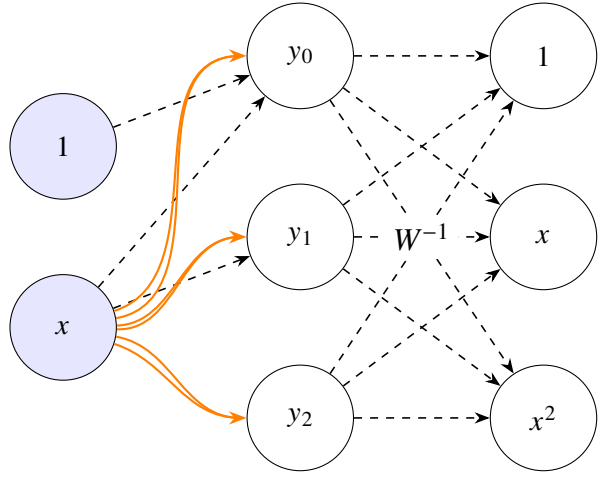

\begin{lemma}
\label{th:1D}
Consider the output of a spectral layer of the form stipulated by Eq.~\eqref{eq:honn}, with the parametrization given by Eqs.~\eqref{eq:spectral_parametrization}-\eqref{eq:spectral_constraints} applied to a one-dimensional input $x$ (together with the bias term), namely:
\begin{equation}
    \bm y = \text{HONN}(1,x),
\end{equation}
where $\bm y = (y_0,y_1,y_2)$. Hence, a solution exists (in terms of the trained spectral parameters) such that $\mathcal{P}_2(x) = \text{Span}\{1,x,x^2\} \subseteq \text{Span}\{y_0,y_1,y_2\},$ i.e., the elements $y_i \in \mathcal{P}_2(x)$ are linearly independent.
\end{lemma}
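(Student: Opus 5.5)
We write the output of the layer explicitly and reduce the claim to the nonvanishing of a $3\times 3$ determinant. With input $(x_0,x_1)=(1,x)$ and the bias folded into the triadic term, Eq.~\eqref{eq:honn} gives, for $k=0,1,2$,
\begin{equation*}
y_k = w_{k0} + w_{k1}x + \tilde w_{k00} + \tilde w_{k01}x + \tilde w_{k11}x^2 .
\end{equation*}
Hence $y_k = a_k + b_k x + c_k x^2$, with $a_k = w_{k0}+\tilde w_{k00}$, $b_k = w_{k1}+\tilde w_{k01}$ and $c_k=\tilde w_{k11}$. The $y_k$ lie in $\mathcal{P}_2(x)$, which has dimension three. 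They are therefore linearly independent, and span all of $\mathcal{P}_2(x)$, exactly when the coefficient matrix $M=[a_k\;b_k\;c_k]_{k=0,1,2}$ is invertible. So it suffices to show that some admissible choice of the trainable parameters in Eqs.~\eqref{eq:spectral_parametrization}-\eqref{eq:spectral_constraints} makes $\det M\neq 0$.

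The trainable parameters enter as follows. The weights $w_{ki}=(\lambda_i^{(in)}-\lambda_k^{(out)})\phi_{ki}$ are completely free, since $\phi_{ki}$ is trainable: fix $\lambda^{(in)}_i-\lambda^{(out)}_k\neq0$ and solve for $\phi_{ki}$. The triadic weights $\tilde w_{kij}=(\tilde\lambda^{(in)}_{ij}-\tilde\lambda^{(out)}_k)\tilde\phi_{kij}$ have fixed $\tilde\phi_{kij}$, which we assume nonzero; this holds for generic, e.g.\ random, initialization.

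We then follow the triangular ansatz of Fig.~\ref{fig:simple_expressivity_spectral_1D}.
\begin{itemize}
\item Set $\tilde\lambda^{(in)}_{00}$ and $\tilde\lambda^{(in)}_{01}$ arbitrarily, and absorb the resulting $\tilde w_{k00}$ and $\tilde w_{k01}$ into the free linear weights. Choosing $w_{k0}$ and $w_{k1}$ lets $a_k$ and $b_k$ take any prescribed values.
\item The only constrained column is $c_k = (\tilde\lambda^{(in)}_{11}-\tilde\lambda^{(out)}_k)\tilde\phi_{k11}$. Choose the output eigenvalues so that $c_0=c_1=0$ and $c_2\neq 0$: take $\tilde\lambda^{(out)}_0=\tilde\lambda^{(out)}_1=\tilde\lambda^{(in)}_{11}$ and $\tilde\lambda^{(out)}_2\neq\tilde\lambda^{(in)}_{11}$.
\item Prescribe $(a_0,b_0)=(1,0)$ and $(a_1,b_1)=(0,1)$.
\end{itemize}
Then $M$ is lower triangular with diagonal $(1,1,c_2)$, so $\det M=c_2\neq0$. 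This yields $y_0=1$ and $y_1=x$, while $y_2=a_2+b_2x+c_2x^2$ with $c_2\neq 0$. Hence $\mathrm{Span}\{y_0,y_1,y_2\}=\mathrm{Span}\{1,x,x^2\}$.

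One could also avoid zeroing $c_0$ and $c_1$. The determinant is affine in each free row entry, and the free linear weights can cancel any fixed contribution. So $\det M$ is a nonzero polynomial in the free parameters, and generic choices make it nonzero. We would present the explicit triangular choice and mention the genericity argument as a remark.

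The main obstacle is the fixed eigenvectors $\tilde\phi_{kij}$. If $\tilde\phi_{k11}=0$ for all $k$, no $x^2$ term can be produced, and the lemma fails. The proof therefore needs an explicit nondegeneracy assumption on the initialization: at least one $\tilde\phi_{k11}\neq0$, which holds almost surely under continuous random initialization. With that assumption, and with the $\tilde\lambda$'s used only to switch the quadratic coefficients on and off, the remaining steps are straightforward linear algebra.
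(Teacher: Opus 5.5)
Your proof is correct and follows essentially the paper's route. Both reduce the claim to $\det\neq0$ for the $3\times3$ coefficient matrix, use the freely trainable $\phi_{ki}$ to force a triangular form, and secure the $x^2$ pivot through $\tilde\phi_{211}\neq0$ and $\tilde\lambda^{(out)}_2\neq\tilde\lambda^{(in)}_{11}$. The only difference is that the paper makes $W$ upper triangular by cancelling just the sub-diagonal entries with the linear weights, so it does not need your extra constraint $\tilde\lambda^{(out)}_0=\tilde\lambda^{(out)}_1=\tilde\lambda^{(in)}_{11}$, which zeroes the $x^2$ coefficients of $y_0$ and $y_1$.
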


\begin{proof}
We begin by writing the explicit expressions for $y_0,y_1,y_2$:
\begin{equation}
\begin{aligned}
\label{eq:input_ouput_1D_x^2}
y_0 &= {(w_{00}+\tilde w_{000})}\,1 + (w_{01}+\tilde w_{001})\,x + \tilde w_{011}\,x^2,\\
y_1 &= {(w_{10}+\tilde w_{100})}\,1 + {(w_{11}+\tilde w_{101})}\,x + \tilde w_{111}\,x^2,\\
y_2 &= {(w_{20}+\tilde w_{200})}\,1 + {(w_{21}+\tilde w_{201})}\,x + {\tilde w_{211}\,x^2}.
\end{aligned}
\end{equation}
From Eq.~\eqref{eq:input_ouput_1D_x^2} we can define the associated matrix $W$ acting from the space $\text{Span}\{1,x,x^2\}$ to $\text{Span}\{y_0,y_1,y_2\}$:
\begin{equation}
\label{eq:associate_matrix_1d_x^2}
W=
\begin{pmatrix}
 {w_{00}+\tilde w_{000}} & w_{01}+\tilde w_{001} & \tilde w_{011}\\
 {w_{10}+\tilde w_{100}} & {w_{11}+\tilde w_{101}} & \tilde w_{111}\\
 {w_{20}+\tilde w_{200}} & {w_{21}+\tilde w_{201}} & {\tilde w_{211}}
\end{pmatrix}.
\end{equation}
Imagine that one can find a (at least one) specific combination of the weights, under the imposed spectral parametrization, such that  $\det(W)\ne0$. Then, we can prove that $\{y_0,y_1,y_2\}$ generates the space $\mathcal{P}_2(x)$, as claimed, and consequently invert the above matrix to remap the output into the canonical basis through a linear layer whose weights incorporate $W^{-1}$ (see Fig. \ref{fig:simple_expressivity_spectral_1D} for a visual representation of the proposed procedure).

To reach this goal, we show that 
$W$ can be forced into an upper triangular form, with non zero elements across the main diagonal, which in turn amounts to set out  
$\det(W) \ne 0$, as wished.
More concretely, we will demonstrate that the spectral parametrization is sufficiently flexible to turn to zero all the entries under the diagonal of matrix \eqref{eq:associate_matrix_1d_x^2},  while keeping the diagonal terms nonzero.

In formulae:
\begin{equation}
    \begin{aligned}
        &{w_{10}=-\tilde w_{100}}\implies \left(\lambda_{0}^{\text{(in)}}-\lambda_{1}^{\text{(out)}}\right) \phi_{10} = -\tilde w_{100} \implies \phi_{10} = -\frac{\tilde w_{100}}{\lambda_{0}^{\text{(in)}}-\lambda_{1}^{\text{(out)}}}, \: \lambda_{0}^{\text{(in)}}\ne\lambda_{1}^{\text{(out)}},\\
        &{w_{20}=-\tilde w_{200}}\implies \left(\lambda_{0}^{\text{(in)}}-\lambda_{2}^{\text{(out)}}\right) \phi_{20} = -\tilde w_{200} \implies \phi_{20} = -\frac{\tilde w_{200}}{\lambda_{0}^{\text{(in)}}-\lambda_{2}^{\text{(out)}}}, \: \lambda_{0}^{\text{(in)}}\ne\lambda_{2}^{\text{(out)}},\\
        &{w_{21}=-\tilde w_{201}}\implies \left(\lambda_{1}^{\text{(in)}}-\lambda_{2}^{\text{(out)}}\right) \phi_{21} = -\tilde w_{201} \implies \phi_{21} = -\frac{\tilde w_{201}}{\lambda_{1}^{\text{(in)}}-\lambda_{2}^{\text{(out)}}}, \: \lambda_{1}^{\text{(in)}}\ne\lambda_{2}^{\text{(out)}},\\
    \end{aligned}
    \label{eq:conditions_under_triangle_1d}
\end{equation}
and
\begin{equation}
    \begin{aligned}
        &{w_{00}\ne-\tilde w_{000}}\implies \left(\lambda_{0}^{\text{(in)}}-\lambda_{0}^{\text{(out)}}\right) \phi_{00} \ne -\tilde w_{000} \implies \phi_{00} \ne -\frac{\tilde w_{000}}{\lambda_{0}^{\text{(in)}}-\lambda_{0}^{\text{(out)}}},\\
        &{w_{11}\ne-\tilde w_{101}}\implies \left(\lambda_{1}^{\text{(in)}}-\lambda_{1}^{\text{(out)}}\right) \phi_{11} \ne -\tilde w_{101} \implies \phi_{11} \ne -\frac{\tilde w_{101}}{\lambda_{1}^{\text{(in)}}-\lambda_{1}^{\text{(out)}}},\\
        &{\tilde w_{211}\ne 0} \implies \left(\tilde\lambda_{11}^{\text{(in)}}-\tilde\lambda_{2}^{\text{(out)}}\right)\tilde \phi_{211} \ne 0 \implies \tilde\lambda_{2}^{\text{(out)}} \ne \tilde \lambda_{11}^{\text{(in)}}\:\: \text{and}\:\: \tilde\phi_{211} \ne 0.
    \end{aligned}
    \label{eq:conditions_diagonal_1d}
\end{equation}

The conditions stemming from  Eqs.~\eqref{eq:conditions_under_triangle_1d} and \eqref{eq:conditions_diagonal_1d} can be straightforwardly fulfilled by (i) setting $\tilde\phi_{211} \ne 0$. Notice that $\tilde\phi_{211}$ is a non tunable parameter and thus it gets permanently frozen to the value assigned following initialization.
(ii) exploiting the fact that $\phi_{ij}$ can be tuned at will to set them to the values prescribed in Eq.~\eqref{eq:spectral_constraints}.
Under the above mentioned conditions, matrix $W$ is reduced to a upper triangular form with non zero diagonal entries. As such, it can be inverted which immediately yields that $\text{Span}\{y_0,y_1,y_2\} = \mathcal{P}_2(x)$, namely the proof of the Lemma.
\end{proof}

The proof can be adapted easily to deal with the subsequent step that aims at generating $\mathcal{P}_3(x)$ from  $\mathcal{P}_2(x)$, with the inclusion of one additional cubic term. By iterating forward the process, with the inclusion of just one non-linear monomial per step, it readily yields $\mathcal{P}_m(x)$, for any generic $m$ degree. Instead of detailing the derivation of the above results with reference to the scalar setting, we shall turn to discuss the general case of polynomial $\mathcal{P}_m(\bm{x})$, where $\bm{x} \in  \mathbb{R}$ as addressed in Theorem \ref{th:universal_approximation_th}.

First, we start by computing the number $D$ of monomials which are associated to a complete polynomial of degree $m$ in $\mathbb{R}^n$. This is given by:

\begin{equation}
    \label{eq:hockey_stick_identity}
    D = \sum_{k=0}^m\binom{n+k-1}{k}= \sum_{k=0}^m\binom{n+k-1}{n-1} = \sum_{k=n-1}^{n+m-1}\binom{k}{n-1} = \binom{m+n}{n},
\end{equation}
where in the last chain of identities of Eq.~\eqref{eq:hockey_stick_identity} we used the \textit{hockey stick identity}. 

Then, to prove the Theorem we proceed with a \textit{ladder} strategy that we outline in the following. Given $\{\underline{e}_0, \cdots, \underline{e}_q \}$, $q+1$ elements of the canonical basis of $\mathcal{P}_m(x_1, \cdots, x_n)$, we provide an iterative recipe to yield a novel and independent element $\underline{e}_{q+1}$ until reaching the sought dimension $D$.

The first step of the ladder strategy (or the initial level of an induction proof) refers to the input features (potentiated with the bias term): $1,x_1,\cdots, x_n$, which also belong to the canonical basis of $\mathcal{P}_m(x_1, \cdots, x_n)$. The following Lemma holds:

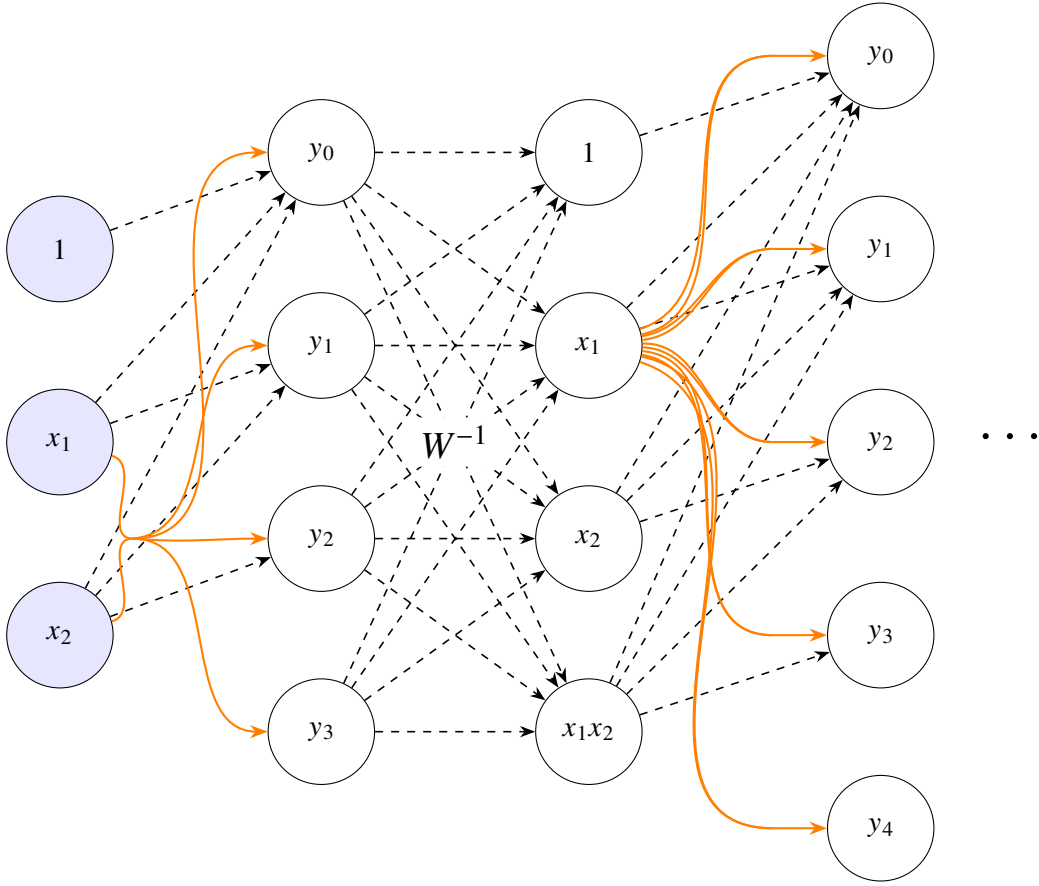
\begin{figure}[t]
    \centering
    \input{simple_expressivity_spectral_bias}
    \caption{Graphical sketch of the ladder demonstration scheme proposed in Lemmas \ref{th:add_one_element_to_base}, \ref{th:polinomial_generation}. Here the input $(1,x_1,x_2)$ is provided to the \textsc{honn} with output $(y_0,y_1,y_2,y_3)$, which can be linearly remapped, via $W^{-1}$, to the new canonical basis augmented with a new nonlinear term, for instance $x_1x_2$. The procedure then continues by creating new terms: $x_1^2$, $x_2^2$, and so on. As in Fig.~\ref{fig:simple_expressivity_spectral_1D}, dashed black lines indicate linear transfer, while orange ones indicate nonlinear interactions.}
    \label{fig:simple_expressivity_spectral}
\end{figure}

\begin{lemma}
{\bf The first step of the induction proof of Theorem \ref{th:universal_approximation_th}}    \label{th:add_one_element_to_base} \\
    Given $(1, \bm x) = (1, x_1, \cdots, x_n)$, then there exists a set of weights $w_{ki}$ and $\tilde w_{kij}$ parametrized according to Eqs.~\eqref{eq:spectral_parametrization},~\eqref{eq:spectral_constraints} such that $\text{Span}\{1,x_1, \cdots,x_n, x_{\bar h}x_{\bar k}\} \subseteq\text{Span} \{y_0, \cdots,y_{n+1}\}$ where $ \bm y = HONN(1, \bm x)$.
\end{lemma}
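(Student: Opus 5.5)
I will follow the strategy of Lemma~\ref{th:1D}, now with input $(1,x_1,\dots,x_n)$ and $n+2$ output nodes $y_0,\dots,y_{n+1}$. Every $y_k$ is a polynomial of degree at most two in $\bm x$. I will restrict attention to the target subspace $V=\text{Span}\{1,x_1,\dots,x_n,x_{\bar h}x_{\bar k}\}$, which has dimension $n+2$. I will then show that the spectral parameters can be tuned so that every $y_k$ lies in $V$ and the $(n+2)\times(n+2)$ coefficient matrix $W$ expressing $(y_0,\dots,y_{n+1})$ in the basis $(1,x_1,\dots,x_n,x_{\bar h}x_{\bar k})$ is upper triangular with nonzero diagonal. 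Invertibility of $W$ then gives the inclusion $V\subseteq\text{Span}\{y_0,\dots,y_{n+1}\}$, and a subsequent linear layer carrying $W^{-1}$ recovers the canonical elements.

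The first step is to remove all unwanted quadratic monomials $x_ix_j$ with $(i,j)\neq(\bar h,\bar k)$ and $i,j\ge1$. Here $x_0=1$, so the products $x_0x_j$ coincide with linear terms. The coefficient of $x_ix_j$ in $y_k$ is $\tilde w_{kij}=(\tilde\lambda^{(in)}_{ij}-\tilde\lambda^{(out)}_k)\tilde\phi_{kij}$. Since $\tilde\phi_{kij}$ is frozen, I cannot use it. Instead I set $\tilde\lambda^{(in)}_{ij}=\tilde\lambda^{(out)}_k$ for all $k$, which forces all the $\tilde\lambda^{(out)}_k$ to share a common value $c$. I then set $\tilde\lambda^{(in)}_{ij}=c$ for every unwanted pair and $\tilde\lambda^{(in)}_{\bar h\bar k}\neq c$. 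This kills every unwanted quadratic term in all outputs simultaneously, while the coefficient of $x_{\bar h}x_{\bar k}$ in $y_{n+1}$ equals $(\tilde\lambda^{(in)}_{\bar h\bar k}-c)\tilde\phi_{n+1,\bar h\bar k}$. This is nonzero provided the frozen value $\tilde\phi_{n+1,\bar h\bar k}\neq0$, a generic-initialization assumption exactly as in Lemma~\ref{th:1D}. The pairs $(0,j)$ are left with whatever fixed contribution $\tilde w_{k0j}$ they produce; these contributions are simply absorbed into the linear columns of $W$.

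The second step handles the linear block. The entry of $W$ in row $k$ and column $j\le n$ is $w_{kj}+\tilde w_{k0j}$, where $\tilde w_{k0j}$ is now a fixed number. I choose the eigenvalues $\lambda^{(in)}_j,\lambda^{(out)}_k$ pairwise distinct, so that every factor $\lambda^{(in)}_j-\lambda^{(out)}_k$ is nonzero. I then solve for the free eigenvector entries: $\phi_{kj}=-\tilde w_{k0j}/(\lambda^{(in)}_j-\lambda^{(out)}_k)$ for $k>j$, which zeroes the subdiagonal, and $\phi_{kk}$ any value avoiding the single forbidden value, which keeps the diagonal nonzero. The same choice is applied to the entries $(n+1,j)$, so that the last row contains only the $x_{\bar h}x_{\bar k}$ coefficient. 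The last column is unconstrained above the diagonal, so its remaining entries are harmless.

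I expect the main obstacle to be the first step, because the frozen $\tilde\phi$ prevent entry-wise control of the triadic weights. The only tunable quantities there are the eigenvalue differences, which are shared across rows and columns. The argument hinges on the observation that a single common value of the $\tilde\lambda^{(out)}_k$ lets one switch each quadratic column on or off globally. If that collapsed freedom proves insufficient in later ladder steps (Lemma~\ref{th:polinomial_generation}), my fallback is to exploit the upper-triangular freedom: only the last row needs a nonzero quadratic entry, and the other rows may carry arbitrary quadratic contributions as long as they remain inside the current span.
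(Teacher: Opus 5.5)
Your proposal is correct and follows the paper's route: you force the $(n+2)\times(n+2)$ coefficient matrix $W$ into upper triangular form with nonzero diagonal by solving for the trainable $\phi_{kj}$ (with $\lambda^{(in)}_j\neq\lambda^{(out)}_k$), and you secure the last pivot through $\tilde\lambda^{(in)}_{\bar h\bar k}\neq\tilde\lambda^{(out)}_{n+1}$ and $\tilde\phi_{n+1,\bar h\bar k}\neq 0$. Your first step, which uses one common value $c$ for all $\tilde\lambda^{(out)}_k$ and for the $\tilde\lambda^{(in)}_{ij}$ of the unwanted pairs, is a useful addition: the paper simply restricts to the case where only $x_{\bar h}x_{\bar k}$ survives, without showing this is achievable when the $\tilde\phi_{kij}$ are frozen.
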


\begin{proof}
Consider $\bm y = (y_0, y_1, \cdots, y_{n+1}) = \text{HONN}(1,\bm x)$, where $\text{HONN}$ stands for a neural network layer defined by Eq.~\eqref{eq:honn} with $n+1$ nodes, as the input and $n+2$ nodes for the output. Restrict to the case where the just one non-linear term is produced (at each iteration or, equivalently, added layer) by the interaction among $x_{\bar h}$ and $x_{\bar k}$, for any generic $\bar h$ and $\bar k$ (raging from $1$ to $n$). Then, the general output $\bm y$ can be written as:
\begin{equation}
\label{eq:general_output_general_dim}
\begin{aligned}
y_0 &= (w_{00}+\tilde w_{000})1 + \cdots + (w_{0j}+\tilde w_{00j})x_j + \cdots + \tilde w_{0\bar h \bar k}x_{\bar h}x_{\bar k} ,\\[2pt]
y_1 &= (w_{10}+\tilde w_{100})1 +  \cdots + (w_{1j}+\tilde w_{10j})x_j + \cdots + \tilde w_{1 \bar h \bar k}x_{\bar h}x_{\bar k} ,\\[2pt]
&\vdots\\
y_i &= (w_{i0}+\tilde w_{i00})1 +  \cdots + (w_{ij}+\tilde w_{i0j})x_j + \cdots + \tilde w_{i\bar h \bar k}x_{\bar h}x_{\bar k} ,\\[2pt]
&\vdots\\
y_{n+1} &= (w_{n+1,0}+\tilde w_{n+1,00})1 +  \cdots + (w_{n+1,j}+\tilde w_{n+1,0j})x_j + \cdots + \tilde w_{n+1,\bar h \bar k}x_{\bar h}x_{\bar k} ,\\[2pt]
\end{aligned}
\end{equation}
with an associated matrix, connecting $\bm y$ to the new basis $\{1, x_1, \cdots,x_n, x_{\bar h} x_{ \bar k} \}$, given by:
\begin{equation}
W=
\begin{pmatrix}
w_{00}+\tilde w_{000} & \cdots &w_{0j}+\tilde w_{00j} & \cdots & \tilde w_{0\bar h \bar k}\\
\vdots &  & \cdots && \cdots \\
w_{i0}+\tilde w_{i00} & \cdots & w_{ij}+\tilde w_{i0j} & \cdots & \tilde w_{i\bar h \bar k}\\
\vdots &  & \cdots && \cdots\\
 w_{n+1,0}+\tilde w_{n+1,00} & \cdots & w_{n+1,j}+\tilde w_{n+1,0j}&\cdots& \tilde w_{n+1,\bar h \bar k}
 \end{pmatrix}.
 \label{eq:associate_matrix_general_demonstration}
 \end{equation}

As shown with reference to the simple scalar setting examined above, it is sufficient to show that a set of spectral weights exists such that $\det(W)\ne 0$. This ensures that $\{y_0, \cdots,y_{n+1}\}$ generates $\text{Span}\{1, x_1, \cdots,x_n, x_{\bar h}x_{\bar k} \}$, namely that the produced output can be remapped onto the expanded canonical basis via a linear transformation associated to $W^{-1}$ (see Fig. \ref{fig:simple_expressivity_spectral}).  

Following the reasoning outlined above, we shall set to prove that 
a specific choice of the parameters exists that turns $W$ into an upper triangular matrix, with non zero diagonal elements. Specifically,  we seek to set to zero all the terms $w_{ij},\, i>j$, while imposing $w_{ij} \ne 0, \, i=j.$ To this end write: 

\begin{equation}
    \begin{aligned}
        &w_{ij}+\tilde w_{i0j} = 0, \: \text{if} \:\:i>j,\\
        &w_{ij}+\tilde w_{i0j} \ne 0, \: \text{if} \:\:i=j,\\
        &\tilde w_{n+1,\bar h \bar k} \ne 0.
    \end{aligned}
    \label{eq:conditions_for_triangular_general}
\end{equation}
The first of the above conditions, expressed in terms of the spectral parametrization Eq.~\eqref{eq:spectral_parametrization}, yields:
\begin{equation}
    \left(\lambda_{j}^{\text{(in)}}-\lambda_{i}^{\text{(out)}}\right) \phi_{ij} = -\tilde w_{i0j} \implies \phi_{ij} = -\frac{\tilde w_{i0j}}{\lambda_{j}^{\text{(in)}}-\lambda_{i}^{\text{(out)}}},  \:\lambda_{j}^{\text{(in)}}\ne\lambda_{i}^{\text{(out)}}, \: \forall\, i>j,
    \label{eq:spectral_condition_under_triangle}
\end{equation}
The second set of conditions, prescribed in Eq. \eqref{eq:conditions_for_triangular_general}, results in
\begin{equation}
    \left(\lambda_{j}^{\text{(in)}}-\lambda_{i}^{\text{(out)}}\right) \phi_{ij} \ne -\tilde w_{i0j} \implies \phi_{ij} \ne -\frac{\tilde w_{i0j}}{\lambda_{j}^{\text{(in)}}-\lambda_{i}^{\text{(out)}}}, \: \forall\, i=j.
    \label{eq:spectral_condition_on_diagonal}
\end{equation}
Both requirements~\eqref{eq:spectral_condition_under_triangle},~\eqref{eq:spectral_condition_on_diagonal} can be easily fulfilled since the spectral eigenvector entries $\phi_{ij}$ (together with the associated eigenvalues) are fully trainable.

The last condition of Eq.~\eqref{eq:conditions_for_triangular_general} reads,
\begin{equation}
    \tilde w_{n+1,\bar h \bar k}\ne 0 \implies \left(\tilde\lambda_{\bar h \bar k}^{\text{(in)}}-\tilde\lambda_{n+1}^{\text{(out)}}\right)\tilde \phi_{n+1,\bar h \bar k} \ne 0 \implies \tilde\lambda_{n+1}^{\text{(out)}} \ne \tilde \lambda_{\bar h \bar k}^{\text{(in)}}.
\end{equation}
This is the only condition imposed on the trainable higher-order spectral parameters (together with initializing $\tilde \phi_{n+1,\bar h \bar k} \ne 0$) and, as such, it can be accommodated for without clashes in the process of nested parameters handling. This ends the proof of the Lemma.
\end{proof}

As we can see from Eq. \eqref{eq:spectral_condition_under_triangle} the parameters $w_{i00}$ are free to be set to zero, recovering the numerical setup where the biases are not included in the higher-order sum of Eq. \eqref{eq:honn}. 

We can proceed forward to prove the following Lemma, which constitutes the main inductive step of the proof. In turn, the following proof combined with the preceding Lemma, will  return a complete proof of the sought Theorem \ref{th:universal_approximation_th}.

\begin{lemma} {\bf The general step of the induction proof of Theorem \ref{th:universal_approximation_th}}       \label{th:polinomial_generation}

    Given $\{\underline{e}_0, \cdots, \underline{e}_q \}$, with $q < D-1$ and  $\underline{e}_0 \equiv 1,\, \underline{e}_1 \equiv x_1,\, \cdots,\, \underline{e}_n \equiv x_n,$. Then there exists a set of weights $w_{ki}$ and $\tilde w_{kij}$ parametrized according to Eqs.~\eqref{eq:spectral_parametrization},~\eqref{eq:spectral_constraints} such that $\text{Span}\{1,\underline{e}_1, \cdots, \underline{e}_q, \underline{e}_{\bar h}\underline{e}_{\bar k}\} \subseteq\text{Span} \{y_0, \cdots,y_{q+1}\}$ where $ \bm y = HONN(1,  \underline{\bm e})$.
\end{lemma}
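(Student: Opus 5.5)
The plan mirrors the proof of Lemma~\ref{th:add_one_element_to_base}, with the input vector $(1,x_1,\dots,x_n)$ replaced by $(\underline{e}_0,\dots,\underline{e}_q)$. We take a \textsc{honn} layer of the form of Eq.~\eqref{eq:honn} with $q+1$ input nodes, carrying $\underline{e}_0\equiv 1,\underline{e}_1,\dots,\underline{e}_q$, and $q+2$ output nodes $y_0,\dots,y_{q+1}$. We fix a target product $\underline{e}_{\bar h}\underline{e}_{\bar k}$, chosen as a monomial not already among $\underline{e}_0,\dots,\underline{e}_q$; such a product exists because $q<D-1$ and every monomial of degree $\le m$ factors as a product of two lower-degree monomials that enter the ladder earlier. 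We then restrict the quadratic part so that the only non-vanishing pair interactions are of two kinds. The first kind are those with the bias node, $\tilde w_{k0j}$, since $\underline{e}_0\underline{e}_j=\underline{e}_j$. The second is the single pair $(\bar h,\bar k)$. All other $\tilde w_{kij}$ are switched off by choosing $\tilde\lambda^{(in)}_{ij}=\tilde\lambda^{(out)}_k$ for those pairs, or equivalently by freezing their $\tilde\phi_{kij}=0$ at initialization.

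With this restriction each output reads
\[
y_k=\sum_{j=0}^{q}\big(w_{kj}+\tilde w_{k0j}\big)\underline{e}_j+\tilde w_{k\bar h\bar k}\,\underline{e}_{\bar h}\underline{e}_{\bar k}.
\]
So $\bm y = W\,(\underline{e}_0,\dots,\underline{e}_q,\underline{e}_{\bar h}\underline{e}_{\bar k})^\top$, where $W$ is a $(q+2)\times(q+2)$ matrix of exactly the shape of Eq.~\eqref{eq:associate_matrix_general_demonstration}. Since $\underline{e}_0,\dots,\underline{e}_q$ and the new monomial are distinct monomials, they are linearly independent in $\mathcal{P}_m$. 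It therefore suffices to exhibit spectral parameters with $\det W\neq0$; then $W^{-1}$ remaps $\bm y$ onto the enlarged basis, giving the claimed inclusion.

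To make $W$ upper triangular with nonzero diagonal we impose the same three families of conditions as in Eq.~\eqref{eq:conditions_for_triangular_general}, namely $w_{ij}+\tilde w_{i0j}=0$ for $i>j$ (with $j\le q$), $w_{ii}+\tilde w_{i0i}\neq0$ for $i\le q$, and $\tilde w_{q+1,\bar h\bar k}\neq0$. We proceed in four steps.
\begin{enumerate}
\item Choose the eigenvalues $\lambda^{(in)}_j,\lambda^{(out)}_i$ pairwise distinct, for instance all $\lambda^{(in)}$ equal to $0$ and all $\lambda^{(out)}$ equal to $1$. Then every factor $\lambda^{(in)}_j-\lambda^{(out)}_i$ is nonzero.
\item Solve the below-diagonal conditions explicitly as $\phi_{ij}=-\tilde w_{i0j}/(\lambda^{(in)}_j-\lambda^{(out)}_i)$, as in Eq.~\eqref{eq:spectral_condition_under_triangle}.
\item Choose each diagonal $\phi_{ii}$ to avoid the single forbidden value of Eq.~\eqref{eq:spectral_condition_on_diagonal}.
\item Satisfy the last condition by initializing $\tilde\phi_{q+1,\bar h\bar k}\neq0$ and choosing $\tilde\lambda^{(out)}_{q+1}\neq\tilde\lambda^{(in)}_{\bar h\bar k}$.
\end{enumerate}
The determinant then equals the product of the diagonal entries, which is nonzero.

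The main obstacle is consistency between the tunable higher-order eigenvalues and the constraints above. Each $\tilde\lambda^{(in)}_{ij}$ is shared by all outputs $k$, and each $\tilde\lambda^{(out)}_k$ is shared by all pairs. Killing unwanted pairs through eigenvalue coincidences would therefore fight with requiring $\tilde\lambda^{(out)}_{q+1}\neq\tilde\lambda^{(in)}_{\bar h\bar k}$. We resolve this by removing unwanted pair terms through the frozen $\tilde\phi_{kij}=0$ instead. Any residual quadratic contributions in rows $0,\dots,q$, including the terms $\tilde w_{k\bar h\bar k}$ and $\tilde w_{k0j}$, do no harm: they lie in the last column or are compensated by the free linear weights $\phi_{kj}$, which are fully tunable and appear linearly.

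Combining the result with Lemma~\ref{th:add_one_element_to_base} as base case and iterating $D-(n+1)$ times yields all of $\mathcal{P}_m$. Each iteration adds one layer, and the final $W^{-1}$ is absorbed into the next layer's linear weights.
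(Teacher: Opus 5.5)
Your argument for the lemma itself is the paper's. The paper only observes that the setting maps onto Lemma~\ref{th:add_one_element_to_base}: $W$ is made upper triangular with nonzero diagonal using the fully trainable linear spectral parameters, and $\tilde w_{q+1,\bar h\bar k}\neq0$ is secured by $\tilde\phi_{q+1,\bar h\bar k}\neq0$ and $\tilde\lambda^{(out)}_{q+1}\neq\tilde\lambda^{(in)}_{\bar h\bar k}$. Your steps 1--4 spell this out, and you usefully make explicit why a new product exists and that the remaining quadratic pairs must be switched off. Two parts of your write-up are wrong, however.

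First, the obstacle you describe is not real. Take $\tilde\lambda^{(out)}_k=c$ for all $k$, $\tilde\lambda^{(in)}_{ij}=c$ for every pair $(i,j)\neq(\bar h,\bar k)$, and $\tilde\lambda^{(in)}_{\bar h\bar k}=c'\neq c$. This removes every unwanted pair from every row and leaves $\tilde w_{k\bar h\bar k}=(c'-c)\tilde\phi_{k\bar h\bar k}$. The only requirement on the frozen eigenvectors is then the generic one, $\tilde\phi_{q+1,\bar h\bar k}\neq0$, as in the paper. A conflict appears only if you also try to clear $\underline{e}_{\bar h}\underline{e}_{\bar k}$ from rows $0,\dots,q$, which triangularity does not require.

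Your workaround, freezing $\tilde\phi_{kij}=0$ for all unwanted pairs, is a non-generic choice of the frozen parameters. If such choices are admitted, the spectral constraint on the triadic weights is vacuous: take $\tilde\lambda^{(in)}\equiv1$, $\tilde\lambda^{(out)}\equiv0$, $\tilde\phi_{kij}=\tilde w_{kij}$. Your proof then never actually uses the spectral restriction.

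Second, $W^{-1}$ cannot be absorbed into the next layer's linear weights. That layer's quadratic part acts on the products $y_iy_j$. Each $y_i$ mixes $\underline{e}_0,\dots,\underline{e}_q$ with $\underline{e}_{\bar h}\underline{e}_{\bar k}$, so these products contain further monomials (e.g.\ $\underline{e}_a\underline{e}_{\bar h}\underline{e}_{\bar k}$, possibly of degree above $m$). The next step therefore does not have the form $\bm y=\text{HONN}(1,\underline{\bm e})$ that the lemma assumes.

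As in the paper (Fig.~\ref{fig:simple_expressivity_spectral}), you need an intermediate, purely linear layer realizing $W^{-1}$ before applying the lemma again. Its triadic part is switched off by making all of its $\tilde\lambda^{(in)}_{ij}$ and $\tilde\lambda^{(out)}_k$ equal.
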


\begin{proof}
The scheme applied to the proof of Lemma \ref{th:add_one_element_to_base} can be straightforwardly generalized to deal with a set of monomials  $\{\underline{e}_0, \cdots, \underline{e}_q \}$, with $q < D-1$. In particular, $\underline{e}_0 \equiv 1,\, \underline{e}_1 \equiv x_1,\, \cdots,\, \underline{e}_n \equiv x_n,$ and define $\underline{e}_{q+1} \equiv e_{\bar{h}} e_{\bar{k}}$, for any integers $\bar{h}$ and $\bar{k}$ (smaller than $q$). It is immediate to show that this setting maps exactly onto the one analyzed above in Lemma \ref{th:add_one_element_to_base}. In other words, the produced output can be turned into a canonical basis via a linear transformation that is granted by the inverse of an upper triangular matrix with non zero determinant. One can thus generate, any monomial $\bar{q}$ in the range $q < \bar{q} < D$, as an iterative sequence of the algorithm described above. 
\end{proof}

We have thus proved by induction that a basis for $\mathcal{P}_m(x_1, \cdots, x_n)$ can be iteratively constructed by an ad hoc choice of the spectral parameters, thus implying that, sufficiently large, spectral higher order neural networks of the type here considered,  possess the  expressivity required to approximate arbitrary polynomials of any degrees $m$ and dimensions $n$. An illustrative scheme to depict the strategy employed in the proof is displayed in Figure \ref{fig:simple_expressivity_spectral}. 
The function $h(\bm x)$, as obtained by stacking an arbitrary number of layers (with the needed number of nodes to store the basis elements), can thus mimic a general polynomial of arbitrary degree. Therefore, thanks to the \textit{Stone-Weierstrass theorem}, it can approximate a target continuous function defined on a compact set to the desired level of approximation.

\newpage

\appendix
\section{Proof of the expressiveness in Direct Space}
\label{sec:direct_space_universal_theorem}

In the following we shall denote by $\mathcal{P}_k(\mathbb{R}^n)$ the vector space of all polynomials of $n$ variables, up to $k$-th degree; we will also use the notation $m^{(h)}$ to refer to an element of degree $h$, of \textit{the standard basis}, \textit{i.e.}, composed by the standard monomials of $\mathcal{P}_k(\mathbb{R}^n)$.

Before delving into the general theorem on the expressiveness of (non-spectral) deep triadic \textsc{mlp}s, it is instructive to consider the basic case of a triadic network with just one input neuron. Given a scalar input $x$ as a first key observation we note that the forward pass, see Eq.~\eqref{eq:2-simplex} in the main text, allows the model to output the standard basis of the space $\mathcal{P}_2(\mathbb{R})$, in the second layer (provided enough neurons are supplied). Indeed, if one equips the second layer with $3$ neurons, it is manifestly possible to set the weights $w_{ki},\tilde{w}_{kij}$ so as to obtain output (proportional to) $1$ on the first neuron of the second layer, (proportional to) $x$ on the second, and (proportional to) $x^2$ on the third. We have thus recovered the standard basis of $\mathcal{P}_2(\mathbb{R})$. 

The second key observation consists in noticing that this process can be iterated across subsequent layers: given a neural layer containing the standard basis of the space $\mathcal{P}_{k}(\mathbb{R})$, the forward pass that follows Eq.~\eqref{eq:2-simplex} can generate the standard basis of the space $\mathcal{P}_{2k}(\mathbb{R})$ in the next layer (once again, if enough neurons are present). The third, and final, observation is that, given the standard basis of $\mathcal{P}_{k}(\mathbb{R})$, the forward pass dictated by Eq.~\eqref{eq:2-simplex} is clearly able to output any polynomial function at least up to degree $k$ in the successive layer, no matter the size of the target output vector. These three observations, once put together, clearly imply that a single-input triadic \textsc{mlp}, if large enough, can regress any polynomial function of $x$. By leveraging on the \textit{Stone–Weierstrass} theorem, we can thus conclude that a sufficiently large, single-input, triadic \textsc{mlp} can approximate \textit{any} continuous function of $x$, even in the absence of a non-linear activation function. This result amounts to an universal approximation theorem for triadic neural networks with one-dimensional inputs, in direct space. A simple graphical sketch of the provided arguments is displayed in Fig.~\ref{fig:simple-expressivity}.

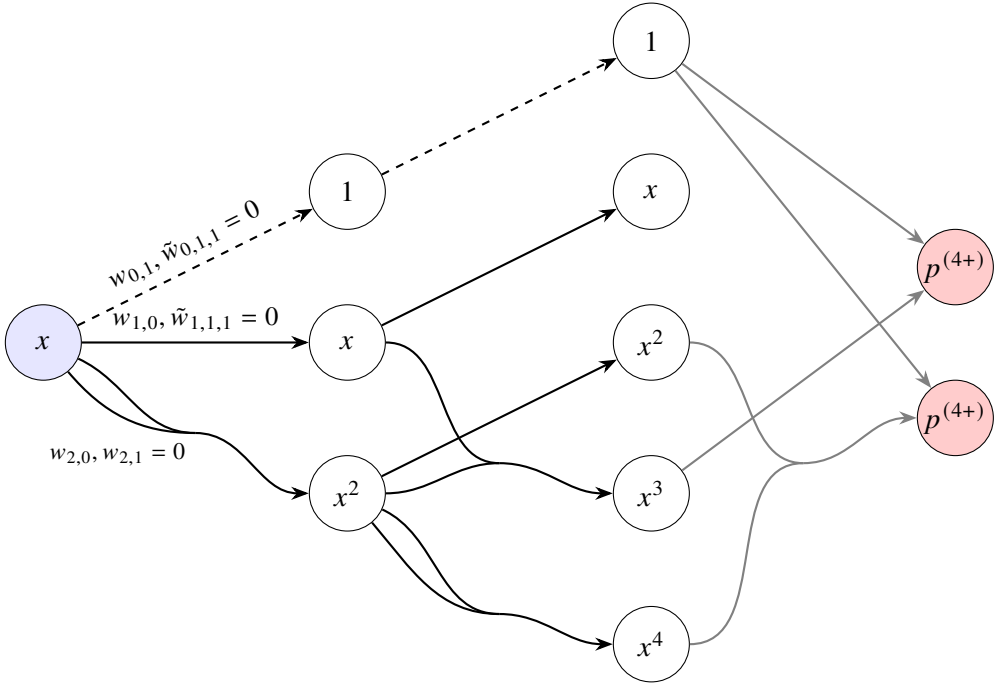
\begin{figure}[t]
    \centering
    \input{simple_expressivity}
    \caption{Graphical sketch of the universality proof for triadic multilayer perceptrons. The structure of the forward pass \eqref{eq:2-simplex} allows our network to build standard basis of the polynomial vector spaces. Specifically, in this toy example it is easy to see that a sufficiently large $k$-th layer can yield the standard basis of the vector space $\mathcal{P}_{2^{k-1}}(\mathbb{R})$. This in turn implies that the subsequent layer will be able to generate polynomial functions $f:\mathbb{R} \to \mathbb{R}^{n}$ at least up to the $2^{k-1}$ order. We keep track of the inherited level of expressivity with the $p^{(\circ +)}$ node tag (in the example in figure $k=3$).}
    \label{fig:simple-expressivity}
\end{figure}

The above qualitative description can be put on solid mathematical ground thanks to Theorem~\ref{UATTN}, which holds true in the general setting of $n$ variables. As a preliminary result, let us state and prove the following Lemma:
\begin{lemma}
A triadic perceptron, with a single output neuron, processing as an input the standard base of the vector space $\mathcal{P}_{k}(\mathbb{R}^n)$ can output any monomial $m$ up to order $2k$.
\label{lemma}
\end{lemma}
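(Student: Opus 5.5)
The idea is that every monomial of degree at most $2k$ factors as a product of two monomials of degree at most $k$. Both factors are then already present among the input neurons, so a single hyperlink can produce the target.

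Let the input layer carry the standard basis of $\mathcal{P}_{k}(\mathbb{R}^n)$. The constant monomial $1$ plays the role of the bias neuron $x_N$, and the remaining neurons are labelled by the monomials $m^{(h)}$ with $1\le h\le k$. Fix a target monomial $m=x_1^{a_1}\cdots x_n^{a_n}$ with total degree $d=\sum_\ell a_\ell\le 2k$. We split into two cases.

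\emph{Case $d\le k$.} The monomial $m$ is itself one of the input neurons, say $x_{i_0}$. We set $w_{0i_0}=1$, set all other linear weights to zero, and set every $\tilde w_{0ij}=0$. Then Eq.~\eqref{eq:2-simplex} returns $y_0=m$. If $m=1$, we use the bias weight instead. So nothing beyond the linear pass is needed here.

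\emph{Case $k<d\le 2k$.} Here we factor $m=m_1m_2$ with $\deg m_1=k$ and $\deg m_2=d-k\le k$.

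\begin{itemize}
\item To build $m_1$, traverse the exponent vector $(a_1,\dots,a_n)$ greedily and peel off variables one at a time until exactly $k$ have been taken. This works because $d>k$.
\item The leftover exponents form $m_2$, which has degree between $1$ and $k$.
\end{itemize}

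Both $m_1$ and $m_2$ are non-constant standard monomials of degree at most $k$. Hence they are non-bias input neurons, say $x_i$ and $x_j$, and after reordering $i\le j$. The case $i=j$ is allowed and covers $m=m_1^2$. We now set $\tilde w_{0ij}=1$, set every other $\tilde w_{0i'j'}=0$, and set all linear weights $w_{0\cdot}=0$. Then $y_0=x_ix_j=m$.

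Note that the quadratic sum in Eq.~\eqref{eq:2-simplex} excludes the bias neuron. This is exactly why the second case needs both factors to be non-constant, and the split $\deg m_1=k\ge1$, $\deg m_2\ge1$ guarantees it.

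The main care point is the factorisation step: the decomposition must never leave a constant factor. The greedy split into degrees $k$ and $d-k$ handles this. Beyond that, the argument is purely combinatorial bookkeeping. As a by-product, linearity of the layer shows the output can in fact be any polynomial of degree at most $2k$, which is what Theorem~\ref{UATTN} will need in the iteration.
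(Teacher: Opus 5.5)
Your proof is correct and rests on the same idea as the paper's: any monomial of degree at most $2k$ factors as $m'm''$ with both factors among the degree-$\le k$ basis elements, so it lies in the span of the linear and quadratic terms of Eq.~\eqref{eq:2-simplex}. The paper phrases this as a (superfluous) contradiction argument. You instead give the weights explicitly and also check that for $d>k$ both factors are non-constant, which matters because the quadratic sum excludes the bias neuron.
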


\begin{proof}
The proof is established by contradiction. Let us focus on the single neuron of the next layer: suppose it exists a monomial $\tilde{m}$ of order $\alpha \leq 2k$ that cannot be generated by the model forward pass, \textit{i.e.} cannot be written in the form
\begin{equation}
    \sum _{i} w_{0i}m_i^{(k)} + \sum _{i,j} \tilde{w}_{0ij}m_i^{(k)}m_j^{(k)},
\end{equation}
and so it does not belong to the span:
\begin{equation}
    \tilde{m} \notin \text{span} \left(\{m^{(k)}_i\} \cup \{m^{(k)}_im^{(k)}_j\}\right),
    \label{eq:span}
\end{equation}
where $m_i^{(k)}$ and $m_j^{(k)}$ are two monomials of degree $k$ (note that the indicies $i,j$ span all the order $k$ standard basis monomials).
However, note that one can always write the selected monomial as:
\begin{equation}
    \tilde{m} = m' \cdot m'',
    \label{eq:before_passage}
\end{equation}
where $m'$ and $m''$ are two monomials of order $\alpha ', \alpha '' \leq k$.  This is clearly an element of the span of $\{m^{(k)}_im^{(k)}_j\}$, thus leading to a contradiction.
\end{proof}

\begin{theorem}
    A sufficiently large triadic multilayer perceptron is able to approximate any continuous function $f:\mathbb{R}^n \to \mathbb{R}^\nu$ with an arbitrary degree of precision.
    \label{UATTN}
\end{theorem}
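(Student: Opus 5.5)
The plan is to combine Lemma~\ref{lemma} with an induction on depth, and then close with the Stone--Weierstrass theorem applied component by component. The standard basis of $\mathcal{P}_k(\mathbb{R}^n)$ is the set of all monomials of degree at most $k$, one of which is the constant $1$.

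We first build a basis layer by layer. Fix the input layer to carry $(1,x_1,\dots,x_n)$, which is the standard basis of $\mathcal{P}_1(\mathbb{R}^n)$; the constant $1$ is the bias neuron. We prove by induction on $\ell$ that a triadic \textsc{mlp} with $\ell$ triadic layers, of suitable width, can place in its last layer the full standard basis of $\mathcal{P}_{2^\ell}(\mathbb{R}^n)$. The width of that layer is $D=\binom{2^\ell+n}{n}$, by Eq.~\eqref{eq:hockey_stick_identity}. For the inductive step, suppose the current layer holds the standard basis $\{m_i\}$ of $\mathcal{P}_k(\mathbb{R}^n)$. Lemma~\ref{lemma} says that, for each monomial $\tilde m$ of degree at most $2k$, one output neuron can be given weights producing exactly $\tilde m$. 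A triadic layer computes every output neuron independently, with its own row $w_{0i},\tilde w_{0ij}$. So we stack one neuron per monomial of degree at most $2k$, and the next layer then holds the standard basis of $\mathcal{P}_{2k}(\mathbb{R}^n)$.

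One bookkeeping point needs care. Eq.~\eqref{eq:2-simplex} includes no bias in the quadratic sum, and its quadratic sum runs only over $i\le j$. Lemma~\ref{lemma} writes $\tilde m=m'm''$ with $\deg m',\deg m''\le k$, and both factors lie in the basis. If either factor is the constant $1$, then $\tilde m$ is itself a basis element and the linear term produces it directly. Otherwise the pair is unordered, so we just order it. The constant $1$ must also be propagated to the next layer; this is done through the bias neuron of the next layer, or, if no bias neuron is available there, by a linear weight on the neuron carrying $1$.

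Next comes the output layer. Given a target $f=(f_1,\dots,f_\nu)$, continuous on a compact $X\subset\mathbb{R}^n$, and a tolerance $\varepsilon>0$, apply Stone--Weierstrass to each component. This gives polynomials $p_r$ with $\sup_X|f_r-p_r|<\varepsilon/\sqrt{\nu}$. Let $m=\max_r\deg p_r$ and choose $\ell$ with $2^\ell\ge m$. After $\ell$ triadic layers we have the standard basis of $\mathcal{P}_{2^\ell}$. One further layer with $\nu$ output neurons, using only its linear part (setting $\tilde w=0$), outputs $p_r$ as a linear combination of basis monomials. The resulting network therefore approximates $f$ uniformly on $X$ to within $\varepsilon$. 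The statement says $f:\mathbb{R}^n\to\mathbb{R}^\nu$; approximation is understood on compact sets, as in Theorem~\ref{th:universal_approximation_th}.

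The main obstacle is not analytic but structural. We must check that the induction really yields exact monomials, so that the independence of rows in a triadic layer suffices and no interference between neurons can occur. We must also handle the constant term and the $i\le j$ convention consistently. Once the basis construction is secured, the final Stone--Weierstrass step is routine.
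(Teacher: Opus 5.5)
Your proposal is correct and follows essentially the same route as the paper's proof: an induction that doubles the polynomial degree at each layer by applying Lemma~\ref{lemma} neuron by neuron, then a final linear read-out layer, then Stone--Weierstrass. Your extra bookkeeping tightens points the paper leaves implicit: you handle a constant factor through the linear/bias term, since the quadratic sum excludes the bias; you order pairs to respect the $i\le j$ convention; and you state explicitly that approximation is meant uniformly on compact sets, with per-component tolerance $\varepsilon/\sqrt{\nu}$.
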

\begin{proof}
Let us start by proving that a large enough triadic multilayer perceptron, with input $\bm{x} \in \mathbb{R}^n$, can generate a standard basis for any polynomial vector space $\mathcal{P}_k(\mathbb{R}^n)$, regardless of how large $k$ is.

This proof is by induction: the base case consists in observing that a triadic perceptron, with input layer of size $v$, by definition holds in its first layer a standard basis of the vector space $\mathcal{P}_{1}(\mathbb{R}^n)$ (\textit{i.e.} $x,y,z,...$; and as we argued in previous sections the absence of the constant bias term as neural activation is irrelevant).

For the inductive step we want to show that, given a collection of monomials $\{m_i^{(k)}\}$ that forms the standard basis for the vector space $\mathcal{P}_{k}(\mathbb{R}^n)$, a triadic perceptron gathering such a collection as an input is able to output, at the next layer, the standard basis of the vector space $\mathcal{P}_{2k}(\mathbb{R}^n)$ (once again, assuming the next layer has enough neurons). Thanks to Lemma \ref{lemma}, we know that the proposed forward pass is able to generate any monomial up to order $2k$ as output. Since the lemma applies to each next layer neuron (with new weights each time), it is sufficient to prove that the triadic perceptron can to generate any collection of monomials up to order $2k$ as an output, \textit{i.e.} it is able to generate the standard basis of the polynomial vector space $\mathcal{P}_{2k}(\mathbb{R}^n)$, thus concluding the proof of the inductive step.

A sufficiently large triadic multilayer perceptron is able to generate any  standard basis $\{m^{(s)}_i\}$ of the polynomial vector space, regardless of the value of $s$ (maximum order) or $n$ (number of variables, \textit{i.e} input size).  This in turn implies that, by adding a new neural layer on top of the existing ones, we will be able to represent any polynomial function $p:\mathbb{R}^n \to \mathbb{R}^\nu$ up to order $s$, with $\nu$ size of the lastly added layer.

Given that there is no bound to the order $s$ (it can be made as large as wished by adding new layers), this means that a sufficiently large triadic multilayer perceptron is able to represent any polynomial function, of any order. This at last allows us to invoke the \textit{Stone–Weierstrass} theorem to finally prove that the model, if given enough neurons in enough layers, is capable to approximating, with an arbitrary degree of precision, any continuous function.
\end{proof}

\section{Parameter scaling comparison with state-of-the-art models}\label{sec:parameters_scaling}

The novel hypergraph architecture can be manifestly re-interpreted as a deep polynomial architecture. In the literature similar models have been discussed \cite{chrysos2021deep, chrysos2022augmenting}, but remarkably the proposed spectral variant yields a smaller parameter scaling when compared with other state-of-the-art polynomial architectures, as we shall see in this section.

For a perceptron with $N$ input and output neurons, a spectral triadic network has the following parametrization
\begin{equation}
    y_k = \sum _{i} w_{ki} x_i + \sum _{i \leq j} (\tilde{\lambda} _{ij}^{(in)}-\tilde{\lambda} _k^{(out)})\tilde{\phi}_{kij}x_ix_j,
\end{equation}
note that the output is a polynomial of order $2$. Since the generalized eigenvectors $\tilde{\phi}_{kij}$ are not trained this implies the handling of two parameter matrices ($W, \tilde{\Lambda} ^{(in)}$), and a single parameter vector ($\bm{\tilde{\lambda}}^{(out)}$). The number of involved parameters is thus:
\begin{equation}
    P_{spectral}(N) = \underbrace{N(N+1)}_{W}+ \underbrace{\frac{N(N+1)}{2}}_{\tilde{\Lambda} ^{(in)}}+ \underbrace{N}_{\bm{\tilde{\lambda}}^{(out)}}=\frac{3}{2}N^2+\frac{5}{2}N.
\end{equation}
If instead we choose to parametrize spectrally also the linear transfer we get:
\begin{equation}
    y_k = \sum _{i} (\lambda_i^{(in)}-\lambda _k^{(out)})\phi_{ki}x_i + \sum _{i,j} (\tilde{\lambda} _{ij}^{(in)}-\tilde{\lambda} _k^{(out)})\tilde{\phi}_{kij}x_ix_j.
\end{equation}
Notice that the triadic part is unaltered, but instead of parameterizing the linear transfer with a single weight matrix $W$ we are now using one matrix ($\phi$) and two vectors ($\bm{\lambda}^{(in)},\bm{\lambda}^{(out)}$). This, of course, implies the following  parameter scaling:
\begin{equation}
    P_{fullspectral}(N) = \underbrace{3N + 1}_{\bm{\lambda}^{(in)},\bm{\lambda}^{(out)},\bm{\tilde{\lambda}}^{(out)}} + \underbrace{N(N+1)}_{\phi} + \underbrace{\frac{N(N+1)}{2}}_{\tilde{\Lambda} ^{(in)}} = \frac{3}{2}N^2+\frac{9}{2}N+1
\end{equation}
Notably, this resultant scaling remains superior to the most parsimonious deep polynomial architectures currently documented in the literature. This is $P_{ccp}(N) = 3N^2 + N$ for the \textsc{ccp} decomposition version of the \textit{ProdPoly} network \cite{chrysos2021deep}, as elaborated in the next sub-section.

\subsection{CCP Decomposition}

In the CCP decomposition a polynomial of order $2$ is instead obtained via the following:

\begin{equation}
    \bm{y} = \bm{\beta} + CU^T_{[1]}\bm{x} + C(U_{[2]} \odot U_{[1]})^T (\bm{x} \odot \bm{x})
\end{equation}
where $\odot$ indicates the \textit{Khatri-Rao} product. Given, as before, an input dimension and an output dimension equal to $N$, this new parametrization introduces a degree of ambiguity since a new hidden dimension $K$ is present. In fact, $C$ is  a $N \times K$ matrix, and $U_{[i]}$ are all $N \times K$ matrices as well. This new variable could imply an increment or a reduction of the number of trainable parameters. For a fair comparison, we just take $K$ to be equal to $N$, and this yields the following parameter scaling for the \textsc{ccp} decomposition:
\begin{equation}
    P_{ccp}(N) = 3N^2 + N
\end{equation}
The result is indeed expected since we deal with three $N \times N$ matrices ($C,U^T_{[1]},U^T_{[2]}$), and a vector $\bm{\beta}$ of length $N$.

\section{Regression}\label{sec:regression}

\begin{figure}[th!]
    \centering
    \includegraphics[width=0.9\linewidth]{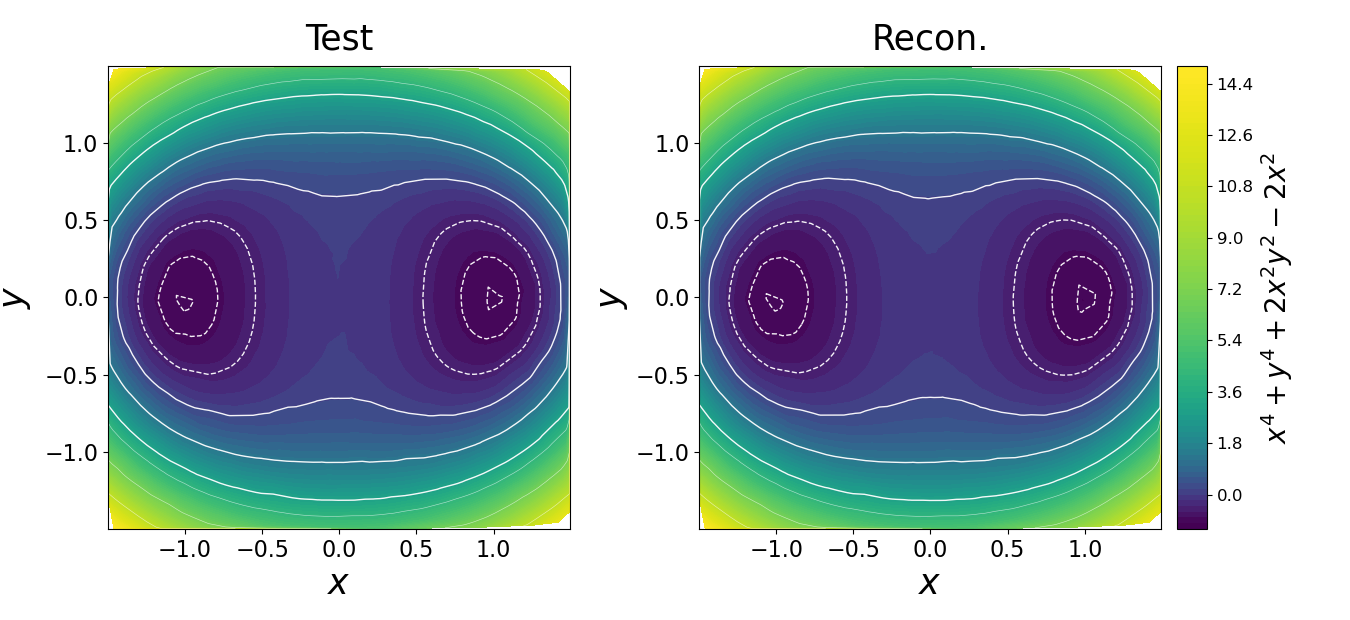}
    \caption{Heatmap comparison between the function $f(x,y) = x^4 + y^4 +2x^2y^2-2x^2$ (on the left) and the reconstructed data by a triadic spectral \textsc{mlp} with hidden dimension $h_{dim}=20$ and $n_\ell = 3$ number of layers (by including input and output) and trained with the \textit{Adam} optimizer and fixed \textit{learning rate}.}
    \label{fig:double-well}
\end{figure}

\begin{figure}[th!]
    \centering
    \subfloat[]{\includegraphics[width=0.49\linewidth]{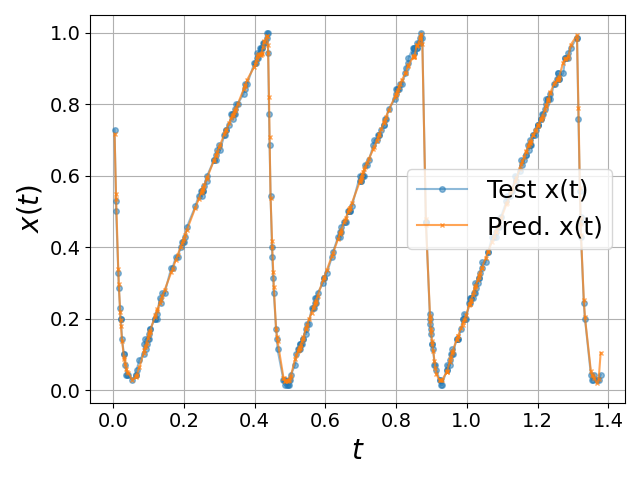}}
    \subfloat[]{\includegraphics[width=0.49\linewidth]{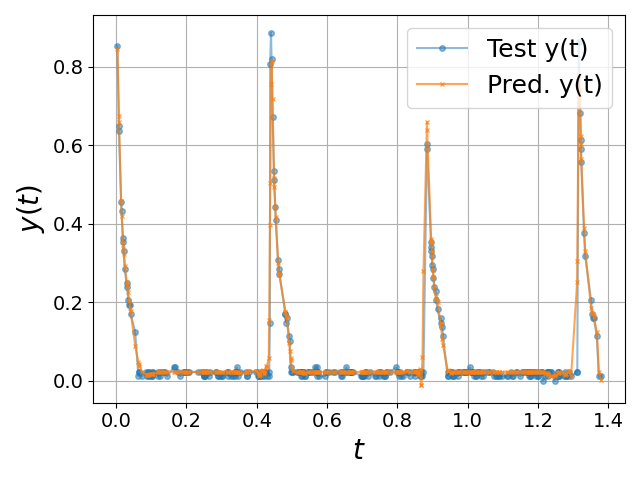}}
    \caption{Regression of experimental data, opportunely rescaled, representing a spiking neural-like behaviour of a relaxation nonlinear circuit \cite{febbe2024dynamical, febbe2024chaos}. On the left, we can see the charge of a capacitor indicated as $x(t)$, fully reconstructed while on the right the  regression of a spiking current denoted by $y(t)$. This fit has been carried out by implementing a triadic \textsc{mlp} with hidden dimension $h_{dim}=20$ and $n_\ell = 20$ number of layers (including input and output) and trained with the \textit{Adam} optimizer, following a \textit{halving-lr-on-plateau} scheduler. Here, self-coupling pairs are not included into the implementation.}
    \label{fig:spiking_dynamical_system}
\end{figure}

Here we report on a gallery of regression tests based on the proposed architectures, both on mock and real data. These results serve primarily as a visual demonstration of the expressive power of the proposed network. A general theorem concerning universality is presented in Sec.~\ref{sec:expressiveness_TMLP}.

We first regressed the double-well profile $f(x,y) = x^4 + y^4 + 2x^2y^2 - 2x^2$ with a triadic spectral \textsc{mlp} (see Fig.~\ref{fig:double-well}). As shown in the figure, the target function (depicted in the left panel) is accurately reconstructed (right panel), with a test \textsc{mae} of $9.5 \times 10^{-6}$.

We also present a fit of experimental data of a two-dimensional dynamical system representing a nonlinear circuit exhibiting neural-like spiking behavior (see \cite{febbe2024dynamical,febbe2024chaos}). As it can be appreciated by visual inspection, a triadic \textsc{mlp} implementation is able to closely track the abrupt behavioral changes as displayed by the recorded data and reproduce the spiking dynamics, with a mean average error \textsc{mae} $8 \times 10^{-3}$.

\FloatBarrier

\subsection*{Code Availability}
The code used for the tests reported in this paper is available at the following link: \url{https://github.com/gianluca-peri/hyperspectral}

\bibliographystyle{ieeetr}
\bibliography{bibliography}

\end{document}

%% file: perceptron1.tex
\begin{center}
\begin{tikzpicture}[
    >=Stealth,
    neuron_style/.style={
        circle, draw=black, minimum size=1.0cm, inner sep=0pt,
        font=\sffamily\normalsize
    },
    input/.style={neuron_style, fill=blue!10},
    output/.style={neuron_style, fill=red!20},
    arrow/.style={->, thick, black},
    dotted_arrow/.style={->, thin, black!60, dashed},
    label/.style={font=\sffamily\small}
]
\def\nodeSpacing{1.2}
\def\inX{0}
\def\outX{3}

\foreach \i in {1,...,5}
    \node[input] (x\i) at (\inX, { (3-\i)*\nodeSpacing }) {$x_\i$};

\node[output] (y1) at (\outX, 1*\nodeSpacing) {$y_1$};
\node[output] (y2) at (\outX, 0*\nodeSpacing) {$y_2$};
\node[output] (y3) at (\outX, -1*\nodeSpacing) {$y_3$};

\draw[arrow, purple] (x1) -- (y1) node[midway, above, sloped, label] {};
\draw[arrow, green!60!black] (x1) -- (y2) node[midway, above, sloped, label] {};
\draw[arrow, orange] (x3) -- (y2) node[midway, below, sloped, label] {};

\draw[dotted_arrow] (x2) -- (y1);
\draw[dotted_arrow] (x2) -- (y2);
\draw[dotted_arrow] (x2) -- (y3);
\draw[dotted_arrow] (x4) -- (y2);
\draw[dotted_arrow] (x5) -- (y2);

\node[
    draw=black,          
    thick,               
    rounded corners=5pt, 
    inner sep=8pt,       
    text height=1.0cm,   
    text depth=1.0cm,    
    fill=black!10,
    minimum width=4cm    
] at (1.5, -4.5) {\footnotesize$\displaystyle y_k = \sum _{i=0}^N w_{ki}x_i$};

\draw[thick, rounded corners=15pt] 
    ([xshift=-0.22cm, yshift=-0.22cm]current bounding box.south west) 
    rectangle 
    ([xshift=0.22cm, yshift=0.22cm]current bounding box.north east);

\end{tikzpicture}
\end{center}

%% file: perceptron2.tex
\begin{center}
\begin{tikzpicture}[
    >=Stealth,
    neuron_style/.style={
        circle, draw=black, minimum size=1.0cm, inner sep=0pt,
        font=\sffamily\normalsize
    },
    input/.style={neuron_style, fill=blue!10},
    output/.style={neuron_style, fill=red!20},
    arrow/.style={->, thick, black},
    dotted_arrow/.style={->, thin, black!60, dashed}
]
\def\nodeSpacing{1.2}
\def\inX{0}
\def\outX{3}
\def\midX{1.5}

\foreach \i in {1,...,5}
    \node[input] (x\i) at (\inX, { (3-\i)*\nodeSpacing }) {$x_\i$};

\node[output] (y1) at (\outX, 1*\nodeSpacing) {$y_1$};
\node[output] (y2) at (\outX, 0*\nodeSpacing) {$y_2$};
\node[output] (y3) at (\outX, -1*\nodeSpacing) {$y_3$};

\draw[dotted_arrow] (x1) -- (y1);
\draw[dotted_arrow] (x3) -- (y2);
\draw[dotted_arrow] (x2) -- (y2);
\draw[dotted_arrow] (x5) -- (y2);

\coordinate (M1) at (\midX, 1.5*\nodeSpacing);
\draw[green!60!black, thick] (x1) to[out=0, in=180] (M1);
\draw[green!60!black, thick] (x2) to[out=0, in=180] (M1);
\draw[->, green!60!black, thick] (M1) to[out=0, in=180] (y1);

\coordinate (M2) at (\midX, -0.5*\nodeSpacing);
\draw[orange, thick] (x3) to[out=0, in=180] (M2);
\draw[orange, thick] (x4) to[out=0, in=180] (M2);
\draw[->, orange, thick] (M2) to[out=0, in=180] (y2);

\coordinate (M3) at (\midX, -0.8*\nodeSpacing); 
\draw[purple, thick] (x3) to[out=0, in=170] (M3);
\draw[purple, thick] (x4) to[out=0, in=190] (M3);
\draw[->, purple, thick] (M3) to[out=0, in=180] (y3);

\coordinate (M4) at (\midX, 0.5*\nodeSpacing);
\draw[cyan!80!black, thick] (x2) to[out=0, in=170] (M4);
\draw[cyan!80!black, thick] (x3) to[out=0, in=190] (M4);
\draw[->, cyan!80!black, thick] (M4) to[out=0, in=180] (y1);

\node[
    draw=black,          
    thick,               
    rounded corners=5pt, 
    inner sep=8pt,       
    text height=1.0cm,   
    text depth=1.0cm,    
    fill=black!10,
    minimum width=4cm    
] at (1.5, -4.5) {\footnotesize$\begin{aligned}&y_k = \sum _{i=0}^N w_{ki}x_i +\\&\sum _{0 \leq i \leq j}^{N-1} \tilde{w}_{kij}x_ix_j\end{aligned}$};

\draw[thick, rounded corners=15pt] 
    ([xshift=-0.22cm, yshift=-0.22cm]current bounding box.south west) 
    rectangle 
    ([xshift=0.22cm, yshift=0.22cm]current bounding box.north east);

\end{tikzpicture}
\end{center}

%% file: perceptron3.tex
\begin{center}
\begin{tikzpicture}[
    >=Stealth,
    neuron_style/.style={
        circle, draw=black, minimum size=1.0cm, inner sep=0pt,
        font=\sffamily\normalsize
    },
    input/.style={neuron_style, fill=blue!10},
    output/.style={neuron_style, fill=red!20},
    dotted_arrow/.style={->, thin, black!60, dashed}
]
\def\nodeSpacing{1.2}
\def\inX{0}
\def\outX{3}
\def\midX{1.5}

\foreach \i in {1,...,5}
    \node[input] (x\i) at (\inX, { (3-\i)*\nodeSpacing }) {$x_\i$};

\node[output] (y1) at (\outX, 1*\nodeSpacing) {$y_1$};
\node[output] (y2) at (\outX, 0*\nodeSpacing) {$y_2$};
\node[output] (y3) at (\outX, -1*\nodeSpacing) {$y_3$};

\draw[dotted_arrow] (x1) -- (y1);
\draw[dotted_arrow] (x5) -- (y3);
\draw[dotted_arrow] (x2) -- (y2);


\coordinate (C1_a) at (\midX, 1.6*\nodeSpacing);
\draw[green!60!black, thick] (x1) to[out=0, in=180] (C1_a);
\draw[green!60!black, thick] (x2) to[out=0, in=180] (C1_a);
\draw[->, green!60!black, thick] (C1_a) to[out=0, in=180] (y1);

\coordinate (C1_b) at (\midX, 1.1*\nodeSpacing);
\draw[green!60!black, thick] (x1) to[out=0, in=180] (C1_b);
\draw[green!60!black, thick] (x2) to[out=0, in=180] (C1_b);
\draw[->, green!60!black, thick] (C1_b) to[out=0, in=180] (y2);

\coordinate (C2_a) at (\midX, -0.2*\nodeSpacing);
\draw[red!70!black, thick] (x3) to[out=0, in=180] (C2_a);
\draw[red!70!black, thick] (x4) to[out=0, in=180] (C2_a);
\draw[->, red!70!black, thick] (C2_a) to[out=0, in=180] (y2);

\coordinate (C2_b) at (\midX, -0.8*\nodeSpacing);
\draw[red!70!black, thick] (x3) to[out=0, in=180] (C2_b);
\draw[red!70!black, thick] (x4) to[out=0, in=180] (C2_b);
\draw[->, red!70!black, thick] (C2_b) to[out=0, in=180] (y3);

\node[
    draw=black,          
    thick,               
    rounded corners=5pt, 
    inner sep=8pt,       
    text height=1.0cm,   
    text depth=1.0cm,    
    fill=black!10,
    minimum width=4cm    
] at (1.5, -4.5) {\footnotesize$\begin{aligned}&y_k = \sum _{i=0}^N w_{ki}x_i+\\&\sum _{i\leq j} (\tilde{\lambda} _{ij}^{(in)}-\tilde{\lambda} _k^{(out)})\tilde{\phi}_{kij}x_ix_j\end{aligned}$};

\draw[thick, rounded corners=15pt] 
    ([xshift=-0.22cm, yshift=-0.22cm]current bounding box.south west) 
    rectangle 
    ([xshift=0.22cm, yshift=0.22cm]current bounding box.north east);

\end{tikzpicture}
\end{center}

%% file: simple_expressivity_spectral_1D.tex
\begin{tikzpicture}[
    >=Stealth,
    neuron_style/.style={
        circle, draw=black, minimum size=1.4cm, inner sep=0pt,
        font=\sffamily\normalsize
    },
    input/.style={neuron_style, fill=blue!10},
    middle/.style={neuron_style},
    base/.style={neuron_style},
    lin/.style={->, dashed, black, line width=0.6pt},
    ho/.style={->, thick, orange},
    matnode/.style={draw=black, fill=white, rounded corners=2pt, inner sep=3pt, font=\sffamily\small},
    scale=0.8
]

\def\xIn{0}
\def\xY{3.9}
\def\xBase{7.9}
\def\nodeSpacing{3.0}

\node[input] (x0) at (\xIn,  {0.5*\nodeSpacing}) {$1$};
\node[input] (x1) at (\xIn, {-0.5*\nodeSpacing}) {$x$};

\node[middle] (y0) at (\xY,  { 1*\nodeSpacing}) {$y_0$};
\node[middle] (y1) at (\xY,  { 0*\nodeSpacing}) {$y_1$};
\node[middle] (y2) at (\xY,  {-1*\nodeSpacing}) {$y_2$};

\foreach \X [count=\i from 0] in {x0,x1}{
  \foreach \Y [count=\j from 0] in {y0,y1,y2}{
    \ifnum\i<\j\else
      \draw[lin] (\X) -- (\Y);
    \fi
  }
}

\def\hoshift{0.1}
\coordinate (t0) at ($(y0.west)+(-\hoshift,0)$);
\coordinate (t1) at ($(y1.west)+(-\hoshift,0)$);
\coordinate (t2) at ($(y2.west)+(-\hoshift,0)$);

\draw[thick, orange] (x1) to[out=18, in=180] (t0);
\draw[thick, orange] (x1) to[out=10, in=180] (t0);
\draw[->, thick, orange] (t0) -- (y0.west);

\draw[thick, orange] (x1) to[out=2,  in=180] (t1);
\draw[thick, orange] (x1) to[out=-2, in=180] (t1);
\draw[->, thick, orange] (t1) -- (y1.west);

\draw[thick, orange] (x1) to[out=-10, in=180] (t2);
\draw[thick, orange] (x1) to[out=-18, in=180] (t2);
\draw[->, thick, orange] (t2) -- (y2.west);

\node[base] (b0) at (\xBase, { 1*\nodeSpacing}) {$1$};
\node[base] (b1) at (\xBase, { 0*\nodeSpacing}) {$x$};
\node[base] (b2) at (\xBase, {-1*\nodeSpacing}) {$x^2$};

\foreach \Y in {y0,y1,y2}{
  \foreach \B in {b0,b1,b2}{
    \draw[lin] (\Y) -- (\B);
  }
}

\node[matnode, draw=none, font=\sffamily\large] (Winv) at ({(\xY+\xBase)/2}, 0) {$W^{-1}$};

\end{tikzpicture}

%% file: simple_expressivity_spectral_bias.tex
\begin{tikzpicture}[
    >=Stealth,
    neuron_style/.style={
        circle, draw=black, minimum size=1.4cm, inner sep=0pt,
        font=\sffamily\normalsize
    },
    input/.style={neuron_style, fill=blue!10},
    middle/.style={neuron_style},
    base/.style={neuron_style},
    lin/.style={->, dashed, black, line width=0.6pt},
    ho/.style={->, thick, orange},
    matnode/.style={draw=black, fill=white, rounded corners=2pt, inner sep=3pt, font=\sffamily\small},
    scale=0.8
]

\def\xIn{0}
\def\xY{4.3}
\def\xBase{8.7}
\def\nodeSpacing{3.2}

\node[input] (x0) at (\xIn,  {1*\nodeSpacing}) {$1$};
\node[input] (x1) at (\xIn,  {0*\nodeSpacing}) {$x_1$};
\node[input] (x2) at (\xIn, {-1*\nodeSpacing}) {$x_2$};

\node[middle] (y0) at (\xY,  {1.5*\nodeSpacing}) {$y_0$};
\node[middle] (y1) at (\xY,  {0.5*\nodeSpacing}) {$y_1$};
\node[middle] (y2) at (\xY, {-0.5*\nodeSpacing}) {$y_2$};
\node[middle] (y3) at (\xY, {-1.5*\nodeSpacing}) {$y_3$};

\draw[lin] (x0) -- (y0);

\draw[lin] (x1) -- (y0);
\draw[lin] (x1) -- (y1);

\draw[lin] (x2) -- (y0);
\draw[lin] (x2) -- (y1);
\draw[lin] (x2) -- (y2);

\def\hoshift{1.2}

\coordinate (MHO) at ({\xIn+\hoshift}, {-0.5*\nodeSpacing});

\draw[thick, orange] (x1) to[out=-15, in=180] (MHO);
\draw[thick, orange] (x2) to[out= 15, in=180] (MHO);

\draw[->, thick, orange] (MHO) to[out=0,  in=180] (y0.west);
\draw[->, thick, orange] (MHO) to[out=5,  in=180] (y1.west);
\draw[->, thick, orange] (MHO) to[out=-5, in=180] (y2.west);
\draw[->, thick, orange] (MHO) to[out=0,  in=180] (y3.west);

\node[base] (b0)  at (\xBase,  {1.5*\nodeSpacing}) {$1$};
\node[base] (b1)  at (\xBase,  {0.5*\nodeSpacing}) {$x_1$};
\node[base] (b2)  at (\xBase, {-0.5*\nodeSpacing}) {$x_2$};
\node[base] (b12) at (\xBase, {-1.5*\nodeSpacing}) {$x_1x_2$};

\foreach \Y in {y0,y1,y2,y3}{
  \foreach \B in {b0,b1,b2,b12}{
    \draw[lin] (\Y) -- (\B);
  }
}

\node[matnode, draw=none, font=\sffamily\Large] (Winv) at ({(\xY+\xBase)/2}, 0) {$W^{-1}$};

\def\xYnew{13.5}

\node[middle] (yy0) at (\xYnew, { 2*\nodeSpacing}) {$y_0$};
\node[middle] (yy1) at (\xYnew, { 1*\nodeSpacing}) {$y_1$};
\node[middle] (yy2) at (\xYnew, { 0*\nodeSpacing}) {$y_2$};
\node[middle] (yy3) at (\xYnew, {-1*\nodeSpacing}) {$y_3$};
\node[middle] (yy4) at (\xYnew, {-2*\nodeSpacing}) {$y_4$};

\foreach \B [count=\i from 0] in {b0,b1,b2,b12}{
  \foreach \Y [count=\j from 0] in {yy0,yy1,yy2,yy3}{
    \ifnum\i<\j\else
      \draw[lin] (\B) -- (\Y);
    \fi
  }
}

\def\hoshiftB{0.9}

\coordinate (tt0) at ($(yy0.west)+(-\hoshiftB,0)$);
\coordinate (tt1) at ($(yy1.west)+(-\hoshiftB,0)$);
\coordinate (tt2) at ($(yy2.west)+(-\hoshiftB,0)$);
\coordinate (tt3) at ($(yy3.west)+(-\hoshiftB,0)$);
\coordinate (tt4) at ($(yy4.west)+(-\hoshiftB,0)$);

\draw[thick, orange] (b1) to[out=18, in=180] (tt0);
\draw[thick, orange] (b1) to[out=12, in=180] (tt0);
\draw[->, thick, orange] (tt0) -- (yy0.west);

\draw[thick, orange] (b1) to[out=10, in=180] (tt1);
\draw[thick, orange] (b1) to[out=6,  in=180] (tt1);
\draw[->, thick, orange] (tt1) -- (yy1.west);

\draw[thick, orange] (b1) to[out=2,  in=180] (tt2);
\draw[thick, orange] (b1) to[out=-2, in=180] (tt2);
\draw[->, thick, orange] (tt2) -- (yy2.west);

\draw[thick, orange] (b1) to[out=-6,  in=180] (tt3);
\draw[thick, orange] (b1) to[out=-10, in=180] (tt3);
\draw[->, thick, orange] (tt3) -- (yy3.west);

\draw[thick, orange] (b1) to[out=-12, in=180] (tt4);
\draw[thick, orange] (b1) to[out=-18, in=180] (tt4);
\draw[->, thick, orange] (tt4) -- (yy4.west);

\node[font=\sffamily\huge] at ({\xYnew+2.2}, 0) {$\cdots$};

\end{tikzpicture}

%% file: simple_expressivity.tex
\begin{tikzpicture}[
    >=Stealth,
    neuron_style/.style={
        circle, draw=black, minimum size=1.0cm, inner sep=0pt,
        font=\sffamily\normalsize
    },
    input/.style={
        neuron_style, fill=blue!10
    },
    output/.style={
        neuron_style, fill=red!20
    },
    middle/.style={
        neuron_style
    },
    arrow/.style={
        ->, thick, black
    },
    dotted_arrow/.style={
        ->, thin, black!60, dashed
    },
    label/.style={
        font=\sffamily\small
    }
]

\def\inputXL{0}
\def\outputXL{4.0}
\def\inputXR{8.0} 
\def\outputXR{13.0} 
\def\nodeSpacing{2} 


\node[input] (x1) at (\inputXL, 0) {$x$};

\node[middle] (y1) at (\outputXL, 1*\nodeSpacing) {$1$};
\node[middle] (y2) at (\outputXL, 0*\nodeSpacing) {$x$};
\node[middle] (y3) at (\outputXL, -1*\nodeSpacing) {$x^2$};

\node[middle] (ys1) at (2*\outputXL, 2*\nodeSpacing) {$1$};
\node[middle] (ys2) at (2*\outputXL, 1*\nodeSpacing) {$x$};
\node[middle] (ys3) at (2*\outputXL, 0*\nodeSpacing) {$x^2$};
\node[middle] (ys4) at (2*\outputXL, -1*\nodeSpacing) {$x^3$};
\node[middle] (ys5) at (2*\outputXL, -2*\nodeSpacing) {$x^4$};

\node[output] (o1) at (3*\outputXL, 0.5*\nodeSpacing) {$p^{(4+)}$};
\node[output] (o2) at (3*\outputXL, -0.5*\nodeSpacing) {$p^{(4+)}$};

\draw[arrow, dashed] (x1) -- (y1) node[midway, above, sloped, label] {$w_{0,1},\tilde{w}_{0,1,1}=0$};
\draw[arrow] (x1) -- (y2) node[midway, above, sloped, label] {$w_{1,0},\tilde{w}_{1,1,1}=0$};

\coordinate (M1) at (2, -0.6*\nodeSpacing);

\draw[thick] (x1) to[out=-25, in=180] (M1);
\draw[thick] (x1) to[out=-50, in=180] (M1);
\draw[->, thick] (M1) to[out=0, in=180] (y3);

\node[black, font=\footnotesize, below left] at (M1) {$w_{2,0},w_{2,1}=0$};

\draw[arrow, dashed] (y1) -- (ys1) node[midway, above, sloped, label] {};

\draw[arrow] (y2) -- (ys2) node[midway, above, sloped, label] {};

\draw[arrow] (y3) -- (ys3) node[midway, above, sloped, label] {};

\coordinate (M2) at (6, -0.8*\nodeSpacing);

\draw[thick] (y2) to[out=0, in=180] (M2);
\draw[thick] (y3) to[out=0, in=180] (M2);
\draw[->, thick] (M2) to[out=0, in=180] (ys4);

\coordinate (M3) at (6, -1.8*\nodeSpacing);

\draw[thick] (y3) to[out=-25, in=180] (M3);
\draw[thick] (y3) to[out=-50, in=180] (M3);
\draw[->, thick] (M3) to[out=0, in=180] (ys5);


\coordinate (M4) at (10, -0.8*\nodeSpacing);

\draw[arrow, gray] (ys1) -- (o1) node[midway, above, sloped, label] {};
\draw[arrow, gray] (ys1) -- (o2) node[midway, above, sloped, label] {};
\draw[arrow, gray] (ys4) -- (o1) node[midway, above, sloped, label] {};

\draw[thick, gray] (ys3) to[out=0, in=180] (M4);
\draw[thick, gray] (ys5) to[out=0, in=180] (M4);
\draw[->, thick, gray] (M4) to[out=0, in=180] (o2);

\end{tikzpicture}